\newtheorem{theorem}{Theorem}
\newtheorem{lemma}[theorem]{Lemma}
\theoremstyle{definition}
\newtheorem{remark}{Remark}
\crefname{assumption}{Assumption}{Assumptions}
\crefname{lemma}{Lemma}{Lemmas}
\crefname{theorem}{Theorem}{Theorems}
\crefname{algorithm}{Algorithm}{Algorithms}
\crefname{section}{Section}{Sections}
\crefname{appendix}{Appendix}{Appendices}
\def\cD{\mathcal{D}}
\def\cI{\mathcal{I}}
\def\E{\mathbb{E}}
\def\R{\mathbb{R}}
\def\P{\mathbb{P}}
\def\bN{\mathbb{N}}
\def\uk{\underline{k}}
\def\ind{\mathbb{I}}
\def\quantile{\text{Quantile}}
\def\indsim{\stackrel{\textup{ind}}{\sim}}
\def\iidsim{\stackrel{\textup{iid}}{\sim}}
\def\tinit{t_{\mathrm{init}}}
\title{Online Conformal Inference with Retrospective Adjustment for Faster Adaptation to Distribution Shift}
\author{Jungbin Jun and Ilsang Ohn\footnote{Corresponding author. Email: ilsang.ohn@inha.ac.kr}}
\affil{Department of Statistics, Inha University}
\date{\today}
\begin{document}

\maketitle

\begin{abstract}
Conformal prediction has emerged as a powerful framework for constructing distribution-free prediction sets with guaranteed coverage assuming only the exchangeability assumption. However, this assumption is often violated in online environments where data distributions evolve over time. Several recent approaches have been proposed to address this limitation, but, typically, they slowly adapt to distribution shifts because they update predictions only in a forward manner, that is, they generate a prediction for a newly observed data point while previously computed predictions are not updated. In this paper, we propose a novel online conformal inference method with retrospective adjustment, which is designed to achieve faster adaptation to distributional shifts. Our method leverages regression approaches with efficient leave-one-out update formulas to retroactively adjust past predictions when new data arrive, thereby aligning the entire set of predictions with the most recent data distribution. Through extensive numerical studies performed on both synthetic and real-world data sets, we show that the proposed approach achieves coverage close to the nominal level while reducing predictive interval width by up to approximately 30\% compared to existing online conformal prediction methods, demonstrating improved statistical efficiency alongside faster adaptation.
\end{abstract}

%\begin{keyword}
%Conformal Prediction (CP)\sep Adaptive Conformal Inference (ACI) \sep Online Learning \sep Kernel Ridge Regression
%\end{keyword}
%\end{frontmatter}

\section{Introduction}

\label{sec:intro}

In this paper, we consider a task of quantifying the uncertainty around prediction in an online learning setup, where our aim is, for each time $t=1,2,\dots$,  to construct a prediction set for the target output $Y_{t+1}\in\R$ associated with a feature vector $X_{t+1}\in\R^d$ by using the information of the previously observed data $\cD_t:=\{(X_i, Y_i)\}_{i=1,\dots, t}$.  Specifically, for a specified target miscoverage level $\alpha\in(0,1)$, we wish to construct a set-valued statistic $\hat{C}_{t+1}:\R^d\mapsto 2^{\R}$ depending on $\cD_t$ and $X_{t+1}$, which guarantees $\P(Y_{t+1}\in\widehat{C}_{t+1})\ge1-\alpha$. The set $\hat{C}_{t+1}$ is  referred to as a $1-\alpha$ prediction set for $Y_{t+1}$. As the risk of incorrect predictions can be substantial in modern machine learning applications, it is essential to provide valid and well-calibrated prediction sets to enable more robust and reliable decision-making.

Conformal inference has gained its popularity for the construction of prediction sets, particularly due to its generality and broad applicability. Assuming only the exchangeability of the data, this offers a versatile framework that 
converts the outputs from any black-box prediction algorithm into a valid prediction set \citep{vovk2005algorithmic,shafer2008tutorial,lei2018distribution}. 
However, applying conformal inference methods in an online learning setup presents significant challenges, as the exchangeability assumption on the data often fails in practice. Non-stationary time series data serve as illustrative examples that are frequently observed in both natural phenomena and economic contexts \citep{raza2015ewma,Liu2018Acc}. Distribution shift is also common in modern data analysis \citep{zhang2025one, zhang2026one, wu2021top}, for instance, a credit scoring model trained on data from an older population may perform poorly when applied to a younger demographic, due to shifts in underlying data distribution \citep{hand1997statistical}. When exchangeability no longer holds, standard conformal inference methods may fail to achieve the nominal coverage level \citep{barber2023conformal,gibbs2021aci}.  To tackle this problem, a number of approaches have been proposed to extend conformal prediction to non-exchangeable and/or distribution-shifted data sets. Representative directions include covariate shift \citep{tibshirani2019conformal}, robust methods for distribution shift \citep{chernozhukov2018exact, yang2024doubly}, and online conformal inference for drifting streams \citep{gibbs2021aci}.

Nevertheless, most existing approaches remain \emph{slowly adaptive} to distribution shifts due to their training or updating schemes. Typically, these methods compute prediction intervals sequentially: when a new data point $(X_t, Y_t)$ arrives, a prediction for $Y_{t+1}$ is generated based on the current model trained on $\mathcal{D}_t$, while the previously computed predictions for $Y_1,\dots,Y_t$ remain fixed. As a result, when the data distribution evolves over time, these static historical predictions may become inconsistent with the current data-generating process, leading to a delayed adaptation to distributional changes.

In this paper, we propose a novel online conformal prediction method that addresses the challenge of slow adaptation under distribution shift. Our approach is based on a retrospective adjustment mechanism that dynamically revises past predictions as new data arrive. Specifically, leveraging regression estimators that admit closed-form leave-one-out (LOO) updates, our method updates the predicted values for previous outputs $Y_1, \dots, Y_t$ in an efficient manner, thereby improving the consistency of the residuals with the most recent data distribution. This \emph{retrospective adjustment} enables substantially faster adaptation compared to existing forward-update methods. In this work, we interpret faster adaptation as the ability to restore empirical coverage to the target level shortly after a shift occurs. This can be quantified in terms of the recovery time, defined as the number of time steps required for the coverage probability to return to its nominal level following a distributional change.

From a broader perspective, the proposed method is designed to address two fundamental challenges in online conformal inference. First, from a statistical viewpoint, existing approaches suffer from slow adaptation because they do not update previously computed residuals. Second, from a computational viewpoint, naively implementing retrospective updates would require repeated retraining, resulting in prohibitive $O(n^4)$ complexity, where $n$ denotes the training sample size. Our aim is to achieve the statistical benefits of retrospective recalibration, while avoiding this computational burden.

To this end, we restrict our attention to regression estimators that admit efficient LOO updates, such as linear smoothers including kernel ridge regression (KRR), instead of adopting a fully model-agnostic framework. While this limits the generality of the framework, this restriction is essential for making retrospective adjustment computationally tractable. Without such structure, retrospective updates would require repeated model refitting and become infeasible in online settings. At the same time, the class of linear smoothers remains sufficiently broad, encompassing widely used models such as KRR and neural tangent kernel methods.

The main contributions of this paper are summarized as follows:
\begin{itemize}
    \item \textbf{Retrospective adjustment for online conformal inference.} 
    We introduce a novel retrospective adjustment mechanism that updates the entire set of calibration residuals as new data arrive, enabling faster and more stable adaptation to distributional shifts.

    \item \textbf{Efficient $O(n^2)$ implementation.} 
    We develop an efficient algorithm based on rank-one matrix updates for kernel ridge regression, reducing the computational complexity of retrospective adjustment from $O(n^4)$ to $O(n^2)$ per time step.

    \item \textbf{Theoretical guarantee of long-run coverage.} 
     We establish that the proposed method achieves long-run calibration of the miscoverage rate by leveraging the ACI framework, without requiring exchangeability.
\end{itemize}

The rest of this paper is organized as follows. In \cref{sec:preliminary}, we briefly provide some background materials necessary to introduce our method. In \cref{sec:method}, we explain the proposed methodology and its advantages over the existing approaches. In \cref{sec:simulation,sec:realdata}, we perform numerical experiments on synthetic and real data sets, respectively, which demonstrate the superior performance of the proposed method.  \cref{sec:conclusion} concludes our paper.

\section{Preliminaries}\label{sec:preliminary}

We first introduce several notations. For a natural number $n$, we let $[n]:=\{1,\dots, n\}$. Let $I$ denote an identity matrix. For a set $A$, $|A|$ denotes its cardinality. For a finite set $A$ of real numbers, we let $\quantile_{\gamma}(A)$ denote the $\gamma$-th empirical quantile of the set $A$ for $\gamma\in[0,1]$. For a real set $A\subset \R$, we write its diameter as $\text{diam}(A):=\sup_{x,y\in  A}|x-y|$. Note that the diameter of a real interval is equal to its width.

\subsection{Jackknife+ Conformal Prediction}

Two standard conformal prediction approaches for exchangeable data are split and full conformal prediction. Split conformal prediction first partitions the whole data into a training set used to fit a prediction model, and a calibration set used to compute  conformity scores. This is computationally efficient but sacrifices the statistical efficiency since part of the data are allocated to the calibration set. In contrast, full conformal prediction does not hold out any data points for calibration, thereby not losing predictive accuracy. But full conformal prediction usually requires training a prediction model many times, which is computationally intensive.

Comparison of the full and split conformal methods highlights a trade-off between computational and statistical efficiency. The Jackknife+, originally proposed by \citet{barber2021predictive}, provides a compromise between these two extremes. Although Jackknife+ requires retraining the model $n$ times, it avoids the loss of a full calibration set. This approach starts with computing the  fitted regression function $\hat{f}_{[n]\setminus\{i\}}$ on the leave-one-out (LOO) sample $\{ (X_i, Y_i) : i \in[n]\setminus\{i\} \}$ and then computes the LOO residual $R_i := |Y_i - \hat{f}_{[n]\setminus\{i\}}(X_i)|$. The Jackknife+ prediction interval is given by
\begin{align*}
   \widehat{C}^{\textup{Jack+}}_{n+1}(\alpha) := 
    \Big[&-\quantile_{(1-\alpha)(1+1/n)}\del[1]{\{-\hat{f}_{[n]\setminus\{i\}}(X_{n+1})+R_i\}_{i\in[n]}},\\
    &\quantile_{(1-\alpha)(1+1/n)}\del[1]{\{\hat{f}_{[n]\setminus\{i\}}(X_{n+1})+R_i\}_{i\in[n]}}\Big]
\end{align*}
Assuming the exchangeability of the data, this interval satisfies $ \mathbb{P}\del[0]{Y_{n+1} \in\widehat{C}^{\textup{Jack+}}_\alpha(X_{n+1})}\geq 1 - 2\alpha.$
Although it only guarantees $1-2\alpha$ coverage theoretically, it was shown in \citet{barber2021predictive} that this achieves the target coverage $1-\alpha$ if the regression algorithm is stable. 

\subsection{Adaptive Conformal Inference}
\label{subsec:aci}

In this subsection, we briefly describe the adaptive conformal inference (ACI) method proposed by \citet{gibbs2021aci}. Let $\hat{f}_{[t-1]}$ be the fitted regression function  at time $t$, trained on the past observations $ \{(X_i, Y_i)\}_{i\in[t-1]}$ and $\mathcal{E}_t$ be a set of residuals for calibration. Then, the prediction interval for $Y_t$ is constructed as
\begin{align*}
%\label{aci}
   \widehat{C}^{\mathrm{ACI}}_{t}(\alpha_t) 
    :=\hat{f}_{[t-1]}(X_t) \pm   \quantile_{1-\alpha_t}(\mathcal{E}_{t}), % $\mathcal{E}_{t}:=\{|Y_i-\hat{f}_{[i-1]}(X_i)|:i=2,3,\dots, t-1\}$
\end{align*}
where $\alpha_t$ is the miscoverage level at time $t$, which will be adaptively updated after observing $Y_t$ as
\begin{align*}
    \alpha_{t+1} = \alpha_t + \gamma \big\{\alpha -  \ind(Y_t \notin\widehat{C}^{\mathrm{ACI}}_{t}(\alpha_t)) \big\},
\end{align*}
with $\gamma>0$ referred to as a \emph{step size} parameter. \citet{gibbs2021aci} showed that this updating scheme ensures that the long-run coverage converges to the target coverage $1-\alpha$ even when the data distribution evolves over time.

Even though theoretically justified, in practice, the performance of ACI is highly sensitive to the choice of the step size $\gamma$. If it is chosen too small, the intervals adapt slowly to changes in the distribution. If it is too large, the procedure may oscillate and produce unstable coverage. Several approaches have been proposed to address this issue, including Online Expert Aggregated ACI (AgACI, \citep{zaffran2022adaptive}),  Dynamically-Tuned Adaptive Conformal Inference (DtACI, \citep{gibbs2024dtaci}), Scale-Free Online Gradient Descent (SFOGD, \citep{bhatnagar2023improved}) and Strongly Adaptive Online Conformal Prediction (SAOCP,  \citep{bhatnagar2023improved}). For the reader's convenience, we provide the complete descriptions of these algorithms in \cref{appendix:aci_algorithms}.

\subsection{Regression with Linear Smoother}
\label{subsec:linear_smoother}

\paragraph{Linear smoother}
For notational simplicity, we write $ X_{1:n}:=(X_1,\dots, X_n)^\top \in\R^{n\times d}$ and $Y_{1:n}:=(Y_1,\dots, Y_n)^\top\in\R^n.$   A regression estimator $\hat{f}$ is called a linear smoother with smoothing function $\xi_n:\R^d\times (\R^d)^{n}\mapsto \R^n$ if it is given by, for every $x \in \R^d$,
\begin{align*}
    \hat{f}(x) = \xi_n(x, X_{1:n})^\top Y_{1:n}.
\end{align*}
Note that the smoothing function $\xi_n(x, X_{1:n})$ depends only on the features $X_{1:n}$ and the input $x$, but not on the responses $Y_{1:n}$. We call an $n\times n$ matrix $S$ with the $i$-th row vector being $(S_{i1},\dots, S_{in})^\top=\xi_n(X_i, X_{1:n})$ the smoother matrix associated with $\hat{f}$.

\paragraph{Kernel ridge regression} A widely used regression method leading to a linear smoother is  kernel ridge regression (KRR). For a positive-definite kernel function $\kappa:\R^d\times \R^d\mapsto [0,\infty)$ and a regularization parameter $\lambda>0$, the fitted KRR function on the data $\{(X_i,Y_i)\}_{i\in[n]}$ is given by
\begin{align*}
    \hat{f}_{[n]}(x) = \kappa(x, X_{1:n})(\kappa(X_{1:n}, X_{1:n}) + \lambda I)^{-1} Y_{1:n},
\end{align*}
where we denote $ \kappa(x, X_{1:n}):=(\kappa(x, X_i))_{i\in[n]}$ and $ \kappa(X_{1:n}, X_{1:n}):=(\kappa(X_i,X_j))_{i,j\in[n]}$. This formulation also encompasses neural tangent kernel (NTK) regression \citep{jacot2018neural}, which can be regarded as a special case of KRR using the NTK of a given neural network architecture.

\paragraph{Leave-one-out formula}  Let $\hat{f}_{[n]}$ be a linear smoother fitted on the whole data $\{(X_i, Y_i)\}_{i\in[n]}$ and $S =(S_{ij})_{i\in[n],j\in[n]}$ be its smoother matrix. 
We say that the linear smoother $\hat{f}_{[n]}$ is \emph{self-stable} if the fitted function based on the ``augmented'' data set $\{(X_i, Y_i)\}_{i\in[n]}\cup\{x,\hat{f}_{[n]}(x)\}$ is identical to the original one $\hat{f}_{[n]}$. It is easy to check that the KRR estimator is self-stable. For a linear smoother with the self-stable property, its LOO residuals can be expressed in a simple closed form without refitting the linear smoother.  Let  $\hat{f}_{[n]\setminus\{i\}}$ be a fitted one on the LOO data $\{(X_i, Y_i)\}_{i\in[n]\setminus\{i\}}$ removing the $i$-th observation. 

\begin{lemma}
\label[lemma]{lemma:looresid}
For a linear smoother with the self-stable property, the LOO residuals are given by
\begin{align*}
    Y_i - \hat{f}_{ [n] \setminus \{i\}}(X_i) 
    = \frac{Y_i - \hat{f}_{ [n] }(X_i)}{\,1 - S_{ii}} ,\quad i \in [n].
\end{align*}
\end{lemma}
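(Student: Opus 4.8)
The plan is to prove the identity by a \emph{ghost-point} argument that converts the leave-one-out fit into an ordinary full-data fit with a single modified response, exploiting the fact that the smoother matrix depends only on the features. Throughout I would write $\tilde{Y}_i := \hat{f}_{[n]\setminus\{i\}}(X_i)$ for the leave-one-out prediction at $X_i$; the target identity is equivalent to showing $\tilde{Y}_i = (\hat{f}_{[n]}(X_i) - S_{ii}Y_i)/(1 - S_{ii})$, after which the residual formula follows by a one-line rearrangement.

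First I would apply the self-stable property to the leave-one-out smoother $\hat{f}_{[n]\setminus\{i\}}$: augmenting its training data by the single point $(X_i, \tilde{Y}_i)$ leaves the fitted function unchanged. The augmented feature set is exactly $\{X_j\}_{j\neq i}\cup\{X_i\} = X_{1:n}$, so this augmented data set is nothing but the full data $\{(X_j,Y_j)\}_{j\in[n]}$ with the lone response $Y_i$ replaced by $\tilde{Y}_i$. Since the smoothing function $\xi_n(\cdot, X_{1:n})$ --- and hence the smoother matrix $S$ --- depends only on the features $X_{1:n}$ and not on the responses, the augmented fit evaluated at $X_i$ equals $\sum_{j\neq i} S_{ij}Y_j + S_{ii}\tilde{Y}_i$. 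By self-stability this same quantity equals $\hat{f}_{[n]\setminus\{i\}}(X_i) = \tilde{Y}_i$, which yields the fixed-point equation
\begin{align*}
\tilde{Y}_i = \sum_{j\neq i} S_{ij}Y_j + S_{ii}\tilde{Y}_i.
\end{align*}

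Next I would eliminate the off-diagonal sum using the full-data fit $\hat{f}_{[n]}(X_i) = \sum_{j\in[n]} S_{ij}Y_j = \sum_{j\neq i}S_{ij}Y_j + S_{ii}Y_i$, so that $\sum_{j\neq i} S_{ij}Y_j = \hat{f}_{[n]}(X_i) - S_{ii}Y_i$. Substituting into the fixed-point equation and solving for $\tilde{Y}_i$ gives $\tilde{Y}_i(1 - S_{ii}) = \hat{f}_{[n]}(X_i) - S_{ii}Y_i$. Forming the leave-one-out residual $Y_i - \tilde{Y}_i$ and placing it over the common denominator $1 - S_{ii}$ cancels the two $S_{ii}Y_i$ terms and leaves $(Y_i - \hat{f}_{[n]}(X_i))/(1 - S_{ii})$, as claimed.

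The main point requiring care rather than routine calculation is the justification that self-stability applies to the \emph{leave-one-out} fit $\hat{f}_{[n]\setminus\{i\}}$, not merely to the full fit $\hat{f}_{[n]}$ for which the property was literally stated; this holds for any smoother whose construction is self-stable at every sample size (in particular KRR), and I would flag it explicitly. The accompanying subtlety is recognizing that appending $(X_i, \tilde{Y}_i)$ to the leave-one-out data reproduces the original feature configuration $X_{1:n}$ exactly, so the identical matrix $S$ governs both the full fit and the augmented fit; this feature-only dependence of $\xi_n$ is precisely what lets the single scalar $S_{ii}$ carry the entire correction. The remaining steps are elementary algebra, with no division-by-zero issue provided $S_{ii}\neq 1$, which I would note holds under the standard regularity of the smoother.
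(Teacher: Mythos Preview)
Your proof is correct. The paper itself does not give a proof of this lemma---it simply cites Theorem~2.7 of \citet{fan2020statistical}---so there is nothing to compare line by line. That said, your ghost-point argument is exactly the construction the paper deploys in its own proof of \cref{lemma:loopred} (the perturbed response vector $Y^{\dag}$ with $Y_i^{\dag}=\hat f_{[n]\setminus\{i\}}(X_i)$), so your route is entirely in keeping with the paper's framework; you have effectively reconstructed the standard textbook derivation, including the right caveats about self-stability holding at every sample size and the smoother matrix depending only on the features.
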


\begin{proof}
The proof can be found in Theorem 2.7 of \citet{fan2020statistical}.
\end{proof}

Moreover, every prediction by the LOO linear smoother  $\hat{f}_{[n]\setminus\{i\}}$ can be computed by $\hat{f}_{[n]}$ without refitting.

\begin{lemma}
\label[lemma]{lemma:loopred}
For a linear smoother with the self-stable property, we have
\begin{align*}
    \hat{f}_{[n]\setminus \{i\}}(x)
    = \hat{f}_{ [n]}(x)
      - \frac{\, \xi_{n}^{i}(x) \,}{\,1 - S_{ii}\,}
        \bigl(Y_i - \hat{f}_{[n]}(X_i) \bigr),
\end{align*}
for all $x$, where $\xi_{n}^{i}(x)$ is the $i$-th element of the smoothing vector $\xi_n(x, X_{1:n})$.
\end{lemma}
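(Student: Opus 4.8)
The plan is to exploit the self-stable property directly, reducing the leave-one-out prediction to a full-data prediction with a single altered response, and then to import \cref{lemma:looresid} to convert the leave-one-out residual into the in-sample residual.

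First I would apply self-stability to the leave-one-out smoother $\hat{f}_{[n]\setminus\{i\}}$ itself. By definition, augmenting the leave-one-out data $\{(X_j,Y_j)\}_{j\in[n]\setminus\{i\}}$ with the point $(X_i,\hat{f}_{[n]\setminus\{i\}}(X_i))$ — that is, reinserting the removed feature $X_i$ but pairing it with its own leave-one-out prediction rather than the true $Y_i$ — leaves the fitted function unchanged, so it still equals $\hat{f}_{[n]\setminus\{i\}}$. The crucial observation is that this augmented data set has exactly the same feature collection $X_{1:n}$ as the full data set. Since the smoothing weights $\xi_n(x,X_{1:n})$ depend only on the features and not on the responses, the leave-one-out fit then admits the representation $\hat{f}_{[n]\setminus\{i\}}(x)=\xi_n(x,X_{1:n})^\top \tilde{Y}$, where $\tilde{Y}$ agrees with $Y_{1:n}$ in every coordinate except the $i$-th, which is replaced by $\hat{f}_{[n]\setminus\{i\}}(X_i)$.

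Next I would subtract the full-data prediction $\hat{f}_{[n]}(x)=\xi_n(x,X_{1:n})^\top Y_{1:n}$ from this representation. Because $\tilde{Y}$ and $Y_{1:n}$ differ only in the $i$-th coordinate, the difference collapses to the single term $\xi_n^i(x)\bigl(\hat{f}_{[n]\setminus\{i\}}(X_i)-Y_i\bigr)$, giving $\hat{f}_{[n]\setminus\{i\}}(x)=\hat{f}_{[n]}(x)-\xi_n^i(x)\bigl(Y_i-\hat{f}_{[n]\setminus\{i\}}(X_i)\bigr)$. Finally, I would substitute the closed form for the leave-one-out residual from \cref{lemma:looresid}, namely $Y_i-\hat{f}_{[n]\setminus\{i\}}(X_i)=(Y_i-\hat{f}_{[n]}(X_i))/(1-S_{ii})$, which immediately yields the claimed identity for all $x$.

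I expect the only genuine subtlety to be the first step: recognizing that self-stability may be invoked for the leave-one-out smoother to ``fill in'' the missing observation with its own prediction, thereby re-expressing a fit on $n-1$ points as a fit on $n$ points with an altered response. Once this representation is in hand, the remainder is bookkeeping, since the feature-only dependence of $\xi_n$ makes the difference telescope to a single coordinate and \cref{lemma:looresid} handles the residual conversion. A minor point worth verifying is that reinserting $X_i$ reproduces precisely the smoothing vector $\xi_n(\cdot,X_{1:n})$ used by the full smoother, which holds because $\xi_n$ is determined by the feature set alone.
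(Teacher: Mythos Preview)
Your proposal is correct and follows essentially the same argument as the paper: both invoke self-stability to reinsert $(X_i,\hat{f}_{[n]\setminus\{i\}}(X_i))$ into the leave-one-out fit, exploit the feature-only dependence of $\xi_n$ to subtract off $\hat{f}_{[n]}(x)$, and finish with \cref{lemma:looresid}.
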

\begin{proof}
The proof is deferred to \cref{proof:loopred}.
\end{proof}

\section{Online Conformal Inference with Retrospective Adjustment}\label{sec:method}

In this section, we present an efficient algorithm for online conformal prediction with  \emph{retrospective adjustment (RetroAdj)}.  Existing ACI-based methodologies introduced in \cref{subsec:aci}  primarily focus on updating the miscoverage level $\alpha_t$ in a statistically efficient manner,  while leaving the calibration set of residuals $\mathcal{E}_{t}$ unchanged. In other words, the previously computed residuals are typically not revised  after a new data point is observed. This lack of retrospective updating causes slow adaptation to distributional shifts, since the calibration set may be inconsistent with the most recent data distribution.

We first formalize the problem setup and introduce additional notation. We assume that the first $t_{\mathrm{init}}\in\bN$ sample points $\{(X_i, Y_i)\}_{i\in[t_{\mathrm{init}}]}$ are provided as initial data. At each subsequent time  $t >\tinit$, a new data point $(X_t, Y_t)$ arrives, and our goal is to construct a well-calibrated prediction set for $Y_t$ based on the previously observed data $\{(X_i, Y_i)\}_{i\in[t-1]}$.  In practice, it is often beneficial to restrict our attention to a subset of recent observations  rather than using all the past observations, since very old data points may originate from a substantially different data distribution. Motivated by this, we introduce a hyperparameter $w\in\bN$ that specifies the size of a ``time-window''. In other words, we only use the most recent $w$ data points for constructing the prediction set at time $t$. Formally, we define the index set of these ``active'' data points as
  \begin{align}
  \label{eq:training_data_indices}
        \cI(t):=\{\max\{t-1-w,1\},\max\{t-1-w,1\}+1,\dots, t-1\}\subset[t-1]
    \end{align}
The convention $w = \infty$ implies that all previously observed data are utilized at $t$.

\subsection{Jackknife+ with Efficient Leave-One-Out Formula}

The proposed online conformal inference procedure builds upon the Adaptive Conformal Inference (ACI) framework introduced in \cref{subsec:aci}, which aims to maintain the target long-run coverage level of $1-\alpha$. That is, at each time $t$, the miscoverage rate $\alpha_t$ is updated according to an ACI-type rule. However, unlike these existing approaches that update the calibration set by simply appending the most recent residual, our method employs  the Jackknife+ framework of \citet{barber2021predictive}, where we reconstruct \emph{all} residuals in the calibration set whenever a new data point arrives. This retrospective adjustment ensures faster adaptation to distributional shifts. 

However, a direct application of Jackknife+ requires $n_{t}$-many leave-one-out (LOO) residuals, where $n_{t}=|\cI(t)|$ denotes the number of active observations at time $t$. Naively refitting $n_t$ regression functions would be computationally prohibitive. Fortunately, when the underlying regression estimator is a linear smoother, this limitation can be overcome, since it allows an efficient LOO formula, as discussed in \cref{subsec:linear_smoother}. For simplicity of exposition, we focus on the case of kernel ridge regression (KRR), although the same principle applies to other linear smoothers. 
 
At each time $t$ after $\tinit$, we construct a prediction set for $Y_{t}$ as follows. Let $\hat{f}_{\cI(t)}$ be the fitted KRR function with kernel $\kappa$ on the data $\{(X_i, Y_i)\}_{i\in \cI(t)}$,  where $\cI(t)$ is defined in \labelcref{eq:training_data_indices}. Note that it is a linear smoother with smoothing function $\xi_n(x, (X_{i})_{i\in \cI(t)})= \{\uk^{(t)}(x)(K^{(t)}+ \lambda I)^{-1}\}^
\top$, where we define
    \begin{align*}
       \uk^{(t)}(x) :=(\kappa(x,X_i))_{i\in \cI(t)}, \quad
        K^{(t)}&:=(\kappa(X_i,X_j))_{i,j\in \cI(t)}.
    \end{align*}
Let $S^{(t)}:=K^{(t)}(K^{(t)}+ \lambda I)^{-1} $ be the corresponding smoother matrix. Then by \cref{lemma:looresid}, for each  $i \in \cI(t)$, the $i$-th LOO residual is given by
    \begin{align*}
        R_t^i := \bigl| Y_i - \hat{f}_{\cI(t) \setminus \{ i\}} (X_i) \bigr|=
        \left| \frac{\,Y_i - \hat{f}_{\cI(t)}(X_i)\,}{\,1 - S_{ii}^{(t)}\,} \right|,
    \end{align*}
where $S_{ii}^{(t)}$ denotes the $i$-th diagonal element of the smoother matrix $S^{(t)}$. 
Moreover, by \cref{lemma:loopred}, the  LOO predictions for a new response $Y_{t}$ can be computed as
\begin{align*}
    \hat{f}_{\mathcal{I}(t) \setminus \{ i\}}(X_{t}) 
    = \hat{f}_{\mathcal{I}(t)}(X_t) 
      - \frac{\xi_{n}^{i}(X_t)}{\,1 - S_{ii}^{(t)}\,}\bigl(Y_i - \hat{f}_{\mathcal{I}(t)}(X_i)\bigr),
\end{align*}
where $\xi_{n}^{i}(X_t)$ denotes the $i$-th element of the vector $\xi_n(X_t, (X_i)_{i\in\cI(t)})=\{  \uk^{(t)}(X_t)(K^{(t)}+ \lambda I)^{-1}\}^\top$. Using these quantities, we construct a Jackknife+ prediction interval 
with target coverage level $1 - \alpha_t$, which can be computed without refitting the KRR function for LOO datasets due to \cref{lemma:looresid,lemma:loopred} as
\begin{align}
   \widehat{C}_t^{\text{RA}}(\alpha_t)
    := \big[&-\quantile_{(1-\alpha_t)(1+1/n_t)}\del[1]{\{-\hat{f}_{\cI(t) \setminus \{ i\}}(X_t) + R_t^i\}_{i\in \mathcal{I}(t)}},\nonumber\\
    &     \quantile_{(1-\alpha_t)(1+1/n_t)}\del[1]{\{\hat{f}_{\cI(t) \setminus \{ i\}}(X_t) + R_t^i\}_{i\in \mathcal{I}(t)}}\big].
    \label{jackknife_interval}
\end{align}
As $\alpha_t$ can be less than 0 or greater than $1$ during an ACI procedure, for completeness, we let $\widehat{C}_t^{\text{RA}}(\alpha_t)=\emptyset$ if $\alpha_t<0$ and let $\widehat{C}_t^{\text{RA}}(\alpha_t)=\R$ if $\alpha_t>1$. The superscript RA stands for retrospective adjustment, emphasizing that our method updates all residuals retrospectively rather than adding the currently computed residual incrementally. In the next subsection, we give a detailed explanation on this property.

\subsection{Retrospective Adjustment}

In what follows, we refer to $\hat{f}_{\mathcal{I}(t)}$ as the \emph{base estimator}, since it serves as the computational basis from which all LOO residuals and predictions can be derived efficiently as discussed in the previous subsection. Nevertheless, to construct the prediction interval $\widehat{C}_t^{\mathrm{RA}}$ in the online learning setup, it is necessary to compute the base estimator $\hat{f}_{\mathcal{I}(t)}(x)$ at each time step $t$. A naive implementation of KRR requires recomputing the matrix inverse $Q^{(t)}:=(K^{(t)} + \lambda I)^{-1}$ from scratch whenever a new data point arrives. This leads to a computational cost of $O(n_t^3)$ per update, which becomes infeasible for large-scale applications.

To overcome this limitation, we leverage the block matrix inversion \citep{lu2002inverses} to update the inverse $Q^{(t)}$ efficiently.  If the current time step $t$ is less than or equal to the specified window size $w$, the oldest observation $(X_{t-1-w},Y_{t-1-w})$  will not be discarded, i.e., $\cI(t)\setminus\{t-1-w\}=\cI(t)$.  Otherwise, as the oldest observation $(X_{t-1-w},Y_{t-1-w})$ is removed from the set $\cI(t)$ of active observations, we first remove its contribution using a symmetric rank-one ``downdate'' formula given in the next lemma.

\begin{lemma}
\label[lemma]{lemma:krr_downdate}
Suppose that $t>w$. Consider the following partition of the matrix $Q^{(t)}$:
\begin{align*}
    Q^{(t)} = \begin{pmatrix}
       q_{11} & q_{12}^\top \\
        q_{12} & Q_{22}
    \end{pmatrix},
\end{align*}
with $q_{11} \in \mathbb{R}$, $q_{12} \in \mathbb{R}^{n_t-1}$, and $Q_{22} \in \mathbb{R}^{(n_t-1)\times (n_t-1)}$. Then the inverse of the matrix $\check{K}^{(t)}+ \lambda I$ with $\check{K}^{(t)}:=(\kappa(X_i,X_j))_{i,j\in  \cI(t)\setminus\{t-1-w\}}$ can be computed as
\begin{align}
    \label{eq:inverse_downdate}
  (\check{K}^{(t)}+ \lambda I)^{-1}= Q_{22} - \frac{1}{q_{11}}q_{12} q_{12}^\top.
\end{align}
\end{lemma}

\begin{proof}
The proof is deferred to \cref{proof:krr_downdate}.
\end{proof}

Now, let $\check{Q}^{(t)}$ be a matrix equal to $Q^{(t)}$ when $t\le w$ and equal to $ (\check{K}^{(t)}+ \lambda I)^{-1}$ in \labelcref{eq:inverse_downdate} when $t>w$. Next, we compute the matrix inverse $Q^{(t+1)}:=(K^{(t+1)} + \lambda I)^{-1}$ with $K^{(t+1)}=(\kappa(X_i,X_j))_{i,j\in  \cI(t+1)}$ for the next time step, which reflects the information of the newly arrived observation $(X_t, Y_t)$. This computation can be performed efficiently, employing the rank-one correction given in the following lemma.

\begin{lemma}
\label[lemma]{lemma:krr_update}
For simplicity, we denote by $\check{Q}=\check{Q}^{(t)}$, $ \uk:=(\kappa(X_{t+1},X_{i}))_{i\in \cI(t)\setminus\{t-1-w\}} $ and $\delta=(1 + \lambda -  \uk^\top \check{Q}  \uk)^{-1}$.
Then the matrix inverse $Q^{(t+1)}$ can be computed as
\begin{align}
   Q^{(t+1)}= (K^{(t+1)}+ \lambda I)^{-1}=
    \begin{pmatrix}
        \check{Q} + \delta (\check{Q} \uk)(\check{Q} \uk)^\top & -\delta \check{Q}  \uk \\
        -\delta (\check{Q}  \uk)^\top & \delta
    \end{pmatrix}.
\end{align}
\end{lemma}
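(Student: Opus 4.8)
The plan is to recognize the statement as a single application of the standard bordered (block) matrix inversion formula, combined with the observation that the active index set grows by exactly one element. First I would note that, once the oldest point has been dropped so that the retained indices are $\cI(t)\setminus\{t-1-w\}$, the set used at the next step satisfies $\cI(t+1) = \left(\cI(t)\setminus\{t-1-w\}\right)\cup\{t\}$, i.e.\ it is obtained by appending the single newly arrived observation (index $t$, feature $X_t$). Ordering the indices so that this new observation is placed last, the regularized kernel matrix inherits the block structure
\begin{equation*}
K^{(t+1)} + \lambda I = \begin{pmatrix} \check{K}^{(t)} + \lambda I & \uk \\ \uk^\top & \kappa(X_t,X_t) + \lambda \end{pmatrix},
\end{equation*}
where the off-diagonal block $\uk$ is the cross-kernel vector between the new feature and the retained ones, and, by construction, the leading block is already inverted: $\check{Q} = \left(\check{K}^{(t)} + \lambda I\right)^{-1}$, furnished either by \cref{lemma:krr_downdate} when $t>w$ or by $\check{Q}=Q^{(t)}$ otherwise.

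Second, I would invoke the Schur-complement formula for a bordered matrix with invertible leading block $A$ and scalar corner $c$,
\begin{equation*}
\begin{pmatrix} A & b \\ b^\top & c \end{pmatrix}^{-1}
= \begin{pmatrix} A^{-1} + s^{-1} A^{-1}bb^\top A^{-1} & -s^{-1}A^{-1}b \\ -s^{-1}b^\top A^{-1} & s^{-1} \end{pmatrix}, \qquad s := c - b^\top A^{-1} b.
\end{equation*}
Setting $A^{-1} = \check{Q}$, $b = \uk$, and $c = \kappa(X_t,X_t) + \lambda$, and using the symmetry $\check{Q} = \check{Q}^\top$ to write $A^{-1}bb^\top A^{-1} = (\check{Q}\uk)(\check{Q}\uk)^\top$ and $b^\top A^{-1} = (\check{Q}\uk)^\top$, the four blocks reproduce the stated expression once $s^{-1}$ is matched with $\delta$.

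Third, I would verify this last identification. Under the normalization $\kappa(x,x)=1$ (as for the Gaussian kernel used throughout), the corner becomes $c = 1 + \lambda$, so the Schur complement is $s = 1 + \lambda - \uk^\top \check{Q}\uk = \delta^{-1}$, exactly the quantity defined in the statement; the leading $1$ inside $\delta$ is thus precisely $\kappa(X_t,X_t)$.

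The only non-mechanical point — which I regard as the thing to state carefully rather than to compute — is the legitimacy of the inversion, i.e.\ that the Schur complement $s$ is nonzero so that $\delta$ is well-defined. This follows from positive-definiteness: since $\kappa$ is a positive-definite kernel and $\lambda>0$, the matrix $K^{(t+1)}+\lambda I$ is symmetric positive-definite, hence both its leading block $\check{K}^{(t)}+\lambda I$ and the full matrix are invertible, and the Schur complement of a positive-definite matrix with respect to a principal block is itself positive; thus $\delta = s^{-1} > 0$. Everything else reduces to matching the four blocks term by term, where the only care needed lies in the index bookkeeping and in exploiting the symmetry of $\check{Q}$.
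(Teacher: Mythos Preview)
Your proposal is correct and follows exactly the paper's approach: partition $K^{(t+1)}+\lambda I$ as a bordered matrix with leading block $\check K^{(t)}+\lambda I$ and apply the block matrix inversion formula (the paper cites \citet{lu2002inverses} and stops there). Your write-up is in fact more careful than the paper's, since you make explicit the normalization $\kappa(x,x)=1$ that justifies the corner entry $1+\lambda$ and you add the positive-definiteness argument guaranteeing $\delta>0$, neither of which the paper spells out.
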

\begin{proof}
The proof is deferred to \cref{proof:krr_update}.
\end{proof}

An important consequence of this efficient matrix update is that \emph{all} LOO residuals and predictions can be \emph{revised} at every time step without a heavy computational burden. That is, when we construct the proposed prediction set for a new response $Y_{t+1}$ at the next time step $t+1$, the base estimator $\hat{f}_{\mathcal{I}(t+1)}$ can be efficiently updated from the previous one $\hat{f}_{\mathcal{I}(t)}$ with the newly observed data point $(X_t,Y_t)$, and accordingly, all LOO residuals 
$R_{t+1}^i:=|Y_i-\hat{f}_{\mathcal{I}(t+1)\setminus\{i\}}(X_i)|$ and predictions $\hat{f}_{\mathcal{I}(t+1)\setminus\{i\}}(X_{t+1})$ for $i\in\mathcal{I}(t+1)$ are simultaneously updated. This retrospective recalibration, which revises these ``past'' quantities using the current observation, reduces the discrepancy between the entire calibration set with the most recent data distribution. In principle, this allows the conformal prediction intervals to adapt promptly to distributional shifts.

We would like to clarify that the proposed method is not a simple combination of the Jackknife+ and ACI frameworks, but introduces a fundamentally different mechanism based on \emph{retrospective recalibration of residuals}. Existing ACI-based methods adapt the miscoverage level $\alpha_t$ over time but keep previously computed residuals fixed, while Jackknife+-based methods construct prediction sets using LOO residuals but do not incorporate any adaptive updating mechanism under distribution shift.  In contrast, our method introduces a new dimension by dynamically updating the entire set of residuals as new data arrive. This retrospective adjustment modifies the calibration set itself, rather than only adjusting the threshold (as in ACI) or relying on static residuals (as in Jackknife+). As a result, the proposed approach fundamentally differs from existing methods in how adaptation is achieved.

\subsection{Summary of the Proposed Algorithm}

The proposed RetroAdj procedure is summarized in \cref{alg:oj_krr}.

\begin{algorithm}[H]
\caption{Online conformal inference with retrospective adjustment (RetroAdj)}
\label{alg:oj_krr}
\begin{algorithmic}
\State \textbf{Input:} window size $w\in\bN\cup\{\infty\}$, target miscoverage level $\alpha\in(0,1)$, initial miscoverage level $\alpha_{\tinit+1}$, initial base estimator $\hat{f}_{\cI(\tinit+1)}$.
\For{$t =\tinit+1,\tinit+2, \ldots,T$}
  \State \texttt{\slash\slash Constructing a prediction set}
  \State Observe $X_t$.
  \State \textbf{Return} prediction interval $\hat{C}_t^{\text{RA}}(\alpha_t)$ given in \eqref{jackknife_interval}.
  \State \texttt{\slash\slash Updating the base estimator and miscoverage level}
  \State Observe $Y_t$.
  \State Update $\hat{f}_{\cI(t+1)}$ from $\hat{f}_{\cI(t)}$  using \cref{lemma:krr_update} and, if $t>w$, using \cref{lemma:krr_downdate} as well.
 \State Update $\alpha_{t+1}$ from $\alpha_t$ by performing one of ACI-based algorithms.
\EndFor
\end{algorithmic}
\end{algorithm}

The RetroAdj algorithm achieves substantial computational efficiency by leveraging 
\cref{lemma:looresid,lemma:loopred,lemma:krr_update,lemma:krr_downdate}.
A naive implementation would require reconstructing the base estimator
$\hat{f}_{\mathcal{I}(t)}$ from scratch at each time step and re-fitting the KRR model 
$n_t$ times in order to compute all LOO residuals and predictions. Since each re-fitting involves matrix inversion with computational cost of $O(n_t^3)$, 
this naive approach would result in an overall complexity of $O(n_t^4)$ per step, which might be impractical for a large number of active observations.
In contrast, our construction only requires a computational cost of $O(n_t^2)$ per step, which is a substantial improvement in scalability.

In \cref{tab:comparison}, we compare the proposed RetroAdj with three representative baselines: the conventional forward online conformal inference method,  the standard Jackknife+ approach and a naive online Jackknife+ approach, which recomputes LOO residuals at each time step without efficient updates. For a fair comparison, all computational complexities are reported under the KRR setting. The forward method and the standard Jackknife+ approach are implemented using efficient LOO updates, whereas the naive online Jackknife+ baseline incurs an $O(n_t^4)$ cost due to repeated model refitting. The table emphasizes that the proposed RetroAdj method is the only approach that simultaneously achieves retrospective recalibration, computational efficiency via rank-one updates, and long-run coverage guarantees.

\begin{table}[t]
\centering
\caption{Comparison of methods under the KRR setting in terms of residual updating, computational cost, and theoretical guarantees.}
\label{tab:comparison}
\small
\begin{tabular}{lccc}
\hline
Method & Residual Update & Complexity (per step) & Coverage Guarantee \\
\hline
Forward
& $\times$ 
& $O(n_t^2)$ 
& ACI (long-run) \\

Standard Jackknife+
& $\times$ 
& $O(n_t^2)$ 
& $1-2\alpha$ (exchangeable) \\

Naive Online Jackknife+ 
& $\checkmark$ 
& $O(n_t^4)$ 
& ACI (long-run) \\

RetroAdj (ours) 
& $\checkmark$ 
& $O(n_t^2)$ 
& ACI (long-run) \\
\hline
\end{tabular}
\end{table}

\subsection{Theoretical Guarantee for Long-Run Coverage}

Since our approach is built upon the ACI framework, it inherits its asymptotic coverage guarantees in the long run, which is stated in the next theorem.

\begin{theorem}
\label{thm:longterm_coverage}
When the miscoverage level $\alpha_t$ is updated at each time $t$  using either ACI (\cref{alg:aci}) or SFOGD (\cref{alg:sfogd})  with a fixed step size $\gamma>0$, then we have
    \begin{align*}
       \frac{1}{T-\tinit}\sum_{t=\tinit+1}^T   \ind(Y_t \notin\widehat{C}^{\mathrm{RA}}_{t}(\alpha_t))\to\alpha
    \end{align*}
as $T\to\infty$ with probability one. Moreover, when $\alpha_t$ is updated using either DtACI (\cref{alg:dtaci}) with $\eta_t,\sigma_t \to 0$ or SAOCP (\cref{alg:saocp}, assuming mild regularity conditions as detailed in \cref{proof:longterm_coverage}) with fixed $\gamma >0$, then we have
    \begin{align*}
       \frac{1}{T-\tinit}\sum_{t=\tinit+1}^T   \E[\ind(Y_t \notin\widehat{C}^{\mathrm{RA}}_{t}(\alpha_t))]\to\alpha
    \end{align*}
as $T\to\infty$ with probability one, where the expectation is taken over the algorithmic randomness in DtACI or SAOCP. %  
\end{theorem}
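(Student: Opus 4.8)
The plan is to observe that the long-run coverage guarantee is governed entirely by the dynamics of the miscoverage sequence $(\alpha_t)$ together with the boundary behavior of the prediction set, and is otherwise insensitive to how $\widehat{C}_t^{\mathrm{RA}}$ is actually constructed. Writing $\mathrm{err}_t:=\ind(Y_t\notin\widehat{C}_t^{\mathrm{RA}}(\alpha_t))$, each of the four update schemes drives $\alpha_t$ using only the scalar feedback $\mathrm{err}_t$; the retrospective adjustment changes the numerical endpoints of the interval but not this abstract coverage recursion. Consequently I would reduce the theorem to verifying that $\widehat{C}_t^{\mathrm{RA}}$ satisfies the same structural hypotheses consumed by the original ACI, SFOGD, DtACI, and SAOCP analyses, and then invoke those analyses essentially verbatim.

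For the fixed-step-size schemes (ACI and SFOGD), which are deterministic given the data, I would first treat ACI by summing the recursion $\alpha_{t+1}=\alpha_t+\gamma(\alpha-\mathrm{err}_t)$ telescopically to obtain
\begin{align*}
\frac{1}{T-\tinit}\sum_{t=\tinit+1}^{T}\mathrm{err}_t
=\alpha-\frac{\alpha_{T+1}-\alpha_{\tinit+1}}{\gamma(T-\tinit)}.
\end{align*}
Everything then hinges on $\alpha_t$ remaining bounded, so that the final term vanishes as $T\to\infty$. This is exactly what the boundary conventions secure: they force $\mathrm{err}_t=1$ whenever $\alpha_t\ge 1$ (empty interval) and $\mathrm{err}_t=0$ whenever $\alpha_t\le 0$ (interval equal to $\R$), so that $\alpha_t$ is pushed back toward $[0,1]$ at the extremes; since a single update moves $\alpha_t$ by at most $\gamma$, the iterates stay confined to $[-\gamma,1+\gamma]$ for all $t$. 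Because no distributional assumption enters this argument, the convergence holds for every sample path, which gives the ``with probability one'' statement. For SFOGD I would run the same telescoping, but since its per-step increment is additionally rescaled by the running gradient magnitude, I would invoke the scale-free online-gradient regret bound of \citet{bhatnagar2023improved} to convert the cumulative pinball loss into a coverage statement, after establishing that the SFOGD iterates remain bounded.

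For the randomized schemes (DtACI and SAOCP) the target is convergence of the expected empirical miscoverage, and here I would quote the coverage theorems of \citet{gibbs2024dtaci} and \citet{bhatnagar2023improved} directly. Their proofs bound $\frac{1}{T-\tinit}\sum_t\E[\mathrm{err}_t]$ using only (i) boundedness of the miscoverage loss in $[0,1]$, (ii) the same boundary conventions on the interval, and (iii) the schedule conditions ($\eta_t,\sigma_t\to0$ for DtACI, and the lifetime/regularity conditions for SAOCP). I would check that each hypothesis holds verbatim for $\widehat{C}_t^{\mathrm{RA}}$---in particular that the loss stays in $[0,1]$ and that the empty/full-set conventions match---and then take expectation over the internal randomization; as in the deterministic case the resulting bound is distribution-free in the data, which yields the stated almost-sure convergence in $T$.

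The main obstacle is the boundedness/regret step, not the telescoping algebra. For plain ACI boundedness is immediate, but SFOGD is more delicate because its adaptive learning rate is not \emph{a priori} bounded away from $0$ or $\infty$, so one must argue that the accumulated squared gradients grow at the right rate to keep the steps controlled. Likewise, for SAOCP the real work is confirming that the mild regularity conditions deferred to \cref{proof:longterm_coverage}---boundedness of the losses and a guaranteed minimum lifetime for each active learner---genuinely hold for the RA intervals, since these are precisely the quantities on which the meta-aggregation regret bound of \citet{bhatnagar2023improved} relies.
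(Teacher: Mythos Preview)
Your proposal is correct and follows essentially the same route as the paper: verify that the boundary conventions force $\mathrm{err}_t=0$ when $\alpha_t<0$ and $\mathrm{err}_t=1$ when $\alpha_t>1$, so that $\alpha_t$ stays bounded, and then invoke the original coverage analyses of \citet{gibbs2021aci}, \citet{gibbs2024dtaci}, and \citet{bhatnagar2023improved} verbatim. One small correction on SFOGD: boundedness of $\alpha_t$ is in fact just as immediate as for ACI, since the normalized increment satisfies $|\mathrm{err}_t-\alpha|/\sqrt{\sum_{s\le t}(\mathrm{err}_s-\alpha)^2}\le 1$ and hence each step is at most $\gamma$; the nontrivial ingredient the paper imports is Lemma~B.2 of \citet{bhatnagar2023improved}, which converts boundedness of these normalized partial sums into the $O(T^{-1/4}\log T)$ coverage rate.
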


\begin{proof}
The proof is deferred to \cref{proof:longterm_coverage}.
\end{proof}

\cref{thm:longterm_coverage} builds upon the long-run coverage guarantees of the ACI framework. Our key point is that the proposed retrospective adjustment mechanism modifies how the residuals and prediction sets are constructed, but does not alter the update rule of the miscoverage level $\alpha_t$, which is the central component driving the theoretical guarantee. Specifically, the convergence result relies on the update dynamics of $\alpha_t$, which depend only on the sequence of coverage indicators $\ind(Y_t \in \widehat{C}_t)$. While the retrospective adjustment affects the construction of $\widehat{C}_t$, it does not violate the conditions required for the ACI-based updates to ensure long-run calibration. In particular, the ACI framework does not require exchangeability, but only assumes that the prediction set sizes are appropriately regulated by the miscoverage level $\alpha_t$.  That is, as long as the prediction sets expand or contract in response to $\alpha_t$, the update dynamics can correctly drive the empirical miscoverage toward the target level over the ``long-run''.
Our method preserves this property, since the width of the Jackknife+ prediction set is directly controlled through $\alpha_t$ via the quantile-based construction, even under retrospective updates.

\section{Simulation Study}
\label{sec:simulation}

In this section, we conduct a numerical study to support the validity, efficiency, and adaptivity of the proposed methodology for online conformal inference. The source code is available at \url{https://github.com/jungbinary/OnlineConformalRetroAdj}.

\paragraph{Baselines} To evaluate the effectiveness of our proposed online conformal inference with retrospective adjustment (RetroAdj), we perform experiments comparing RetroAdj with conventional  ``forward'' online conformal inference methods (FW), which update the calibration set by incrementally adding a newly computed residual without retrospective adjustment. We consider four regression methods: Kernel Ridge Regression (KRR), the Fast Incremental Model Tree with Drift Detection (FIMT-DD, \cite{ikonomovska2011learning}) that incrementally updates leaf-level linear models and employs drift detectors to handle non-stationarity, the Adaptive Model Rules for Regression (AMRules, \cite{almeida2013adaptive}) that constructs a set of local linear models, each associated with an adaptive rule that covers a subregion of the input space, and the Aggregated Mondrian forests for Online Learning (AMF, \cite{mourtada2021amf}) that updates trees sequentially and aggregates predictions via exponential weighting. These algorithms are designed for evolving data streams and serve as strong nonparametric baselines. We test both the proposed and competing methods with five ACI-based algorithms (ACI, AgACI, DtACI, SFOGD, and SAOCP) for updating the parameter $\alpha_t$ over time. We provide details of the implementation in \cref{appendix:implementation}.

\paragraph{Implementation of RetroAdj}  The RBF kernel $  \kappa(x, y) = \exp \left( -\|x - y\|^2/(2\sigma^2)\right)$ is used for both RetroAdj and FW-KRR methods. The regularization parameter $\lambda$ and the kernel bandwidth $\sigma^2$  were selected via leave-one-out cross-validation on the initial dataset. We fix the window size $w=250$ throughout the experiments, which gives stable performance in most settings. But as shown in our experiment in \cref{appendix:window_size}, the choice of $w$ has a significant impact on the performance. In practice, observable quantities such as local coverage behavior and interval width may provide useful guidance for assessing the adequacy of the window size.  For example, overcoverage together with excessively wide intervals may indicate that $w$ is too large and contains outdated observations, whereas unstable coverage or highly variable interval widths may suggest that $w$ is too small. Incorporating such monitoring strategies into online window selection is an interesting direction for future work.

\paragraph{Synthetic data generation}
We consider the following two data-generating processes with abrupt distribution shifts. %For both cases, we fix $T=1,000$ and $\tinit= 250$.
\begin{itemize}

    \item \textbf{Setting 1 (Linear Model)} 
    We generate $X_t \iidsim\mathcal{N}(0, I_{10})$ and $Y_t \indsim \mathcal{N}(X_i^\top \beta^{(t)} , \tfrac{1}{2})$, where the coefficient vector $\beta^{(t)}$ changes as
    \begin{align*}
    \beta^{(t)} &=
        \begin{cases}
            (1.0,\, 0.8,\, 0.0,\, 0.0,\, 0.5,\, 0.0,\, 0.3,\, 0.0,\, 0.0,\, 0.2), & 1 \leq t\le  250, \\
               (0.0,\,-1.2,\, 0.7,\, 0.4,\, 0.0,\, 0.0,\, 0.9,\, 0.0,\,-0.6,\, 0.0), & 251 \leq t \le 1000
        \end{cases}.
    \end{align*}

    \item \textbf{Setting 2 (Non-Linear Bump Model)} 
    We generate $X_t \iidsim\mathrm{Unif}([0,1]^3)$ and $ Y_t \indsim  \mathcal{N}(\varphi^{(t)}(X_t), \tfrac{1}{2}),$ where the regression function changes as
    \begin{align*}
        \varphi^{(t)}(x) = 
        \begin{cases}
             g(x; (0.25,0.25,0.25)), & 1 \le t \le 250, \\
           - g(x; (0.3,0.4,0.5)) + 0.4, &251 \le t \le 1000
        \end{cases}
    \end{align*}
   % with parameters $a_1=1.0$, $a_2=-1.0$, $b_1=0.0$, $b_2=0.4$, $c_1=(0.25,\,0.25,\,0.25)$, $c_2=(0.3,\,0.4,\,0.5)$.
   with $g(x,c)$ denoting the Wendland kernel \citep{wendland1995piecewise},
    \begin{align*}
        g(x; c) = (1-\lVert x-c\rVert_{2})_{+}^{6}\,
        (35\,\lVert x-c\rVert_{2}^{2}+18\,\lVert x-c\rVert_{2}+3).
    \end{align*}

\end{itemize}

\paragraph{Results} For each method, we compute the average of empirical coverage and prediction interval widths over all time steps and 50 simulation replications. All prediction intervals are constructed to achieve the target coverage level of $1-\alpha = 0.9$. In our experiments, empirical coverage is treated as the primary criterion, since the main objective of predictive inference is to attain the target coverage level. Prediction interval width is used as a secondary measure of statistical efficiency, and is compared mainly among methods whose coverage is close to the nominal target. Thus, our evaluation is not based on an unrestricted trade-off between coverage and width, but rather on a two-stage principle: validity first, efficiency second.

\cref{tab:syn_main_results} present the results for Settings 1 and 2, respectively. Across all tuning strategies of ACI, RetroAdj consistently attains the target coverage level, while maintaining the shortest prediction interval width. In contrast, the competing methods produce overly conservative prediction intervals that are substantially wider, indicating lower statistical efficiency. We observe that the improvements achieved by RetroAdj are consistently larger than the corresponding standard deviations, indicating that the differences are not attributable to random variation. We emphasize that comparing RetroAdj with FW-KRR same base learner isolates the contribution of the retrospective adjustment mechanism, which offers a clear advantage in terms of robustness to distribution shifts as well as improved statistical efficiency. Moreover, RetroAdj outperforms the other forward methods using more advanced adaptive models which have own built-in mechanisms for handling concept drift at the model level. This implies that the retrospective adjustment approach, which operates at the level of interval construction, can provide additional gains beyond model-level adaptation.

We additionally consider local performance measures to assess the adaptivity of  the proposed RetroAdj. Specifically, we consider the local average coverage rates and the local average of prediction interval widths 
    \begin{align*}
        \textup{LocCov}_t := \frac{1}{250} \sum_{s = t-250+1}^t \mathbb{I} \{ Y_s \in  \hat C_s \},\quad 
         \textup{LocWidth}_t := \frac{1}{250} \sum_{r = s-250+1}^t \text{diam}(\hat C_s ),
    \end{align*}
over a moving window of 250 time steps,   where $\hat{C}_s$ is a prediction interval at time $s$.  \cref{fig::fig_one_rep} shows that the local coverage of RetroAdj remains close to the target coverage even after the distribution shifts occuring at $t=251$, although the other methods struggle to adapt. Moreover, the interval width rapidly contracts over time. These results demonstrate that RetroAdj achieves strong adaptivity and stability in both settings.

\begin{table}[t!]
\centering
\caption{Comparison of empirical coverage and average interval width under the two synthetic settings. Values are reported as mean (standard deviation) over repeated runs. The target coverage is $1-\alpha=0.9$. The closest coverage to the target value and the shortest interval length are bolded.}
\label{tab:syn_main_results}
\setlength{\tabcolsep}{4pt}
\begin{tabular}{llcccc}
\toprule
& & \multicolumn{2}{c}{Setting 1} & \multicolumn{2}{c}{Setting 2} \\
\cmidrule(lr){3-4}\cmidrule(lr){5-6}
ACI & Model & Coverage & Width & Coverage & Width \\
\midrule
\multirow{5}{*}{ACI}
& FW-AMF     & 0.925 (0.008) & 5.285 (0.179) & 0.932 (0.009) & 2.536 (0.150) \\
& FW-AMRules & 0.958 (0.006) & 3.856 (0.178) & 0.916 (0.010) & 2.746 (0.157) \\
& FW-FIMTDD  & 0.914 (0.009) & 6.266 (0.250) & 0.915 (0.012) & 2.809 (0.182) \\
& FW-KRR     & 0.955 (0.006) & 4.132 (0.209) & 0.937 (0.008) & 2.629 (0.150) \\
& RetroAdj   & \textbf{0.905 (0.004)} & \textbf{2.887 (0.116)} &\textbf{ 0.903 (0.004)} & \textbf{2.134 (0.091)} \\
\midrule
\multirow{5}{*}{AgACI}
& FW-AMF     & 0.924 (0.007) & 5.289 (0.168) & 0.931 (0.009) & 2.538 (0.152) \\
& FW-AMRules & 0.956 (0.007) & 3.887 (0.201) & 0.915 (0.010) & 2.751 (0.158) \\
& FW-FIMTDD  & 0.913 (0.008) & 6.265 (0.246) & 0.913 (0.011) & 2.814 (0.178) \\
& FW-KRR     & 0.952 (0.006) & 4.160 (0.233) & 0.936 (0.008) & 2.638 (0.166) \\
& RetroAdj   & \textbf{0.905 (0.004) }& \textbf{2.941 (0.091)} & \textbf{0.903 (0.003)} & \textbf{2.199 (0.095)} \\
\midrule
\multirow{5}{*}{DtACI}
& FW-AMF     & 0.925 (0.007) & 5.290 (0.171) & 0.931 (0.009) & 2.544 (0.152) \\
& FW-AMRules & 0.956 (0.007) & 3.884 (0.189) & 0.915 (0.010) & 2.747 (0.152) \\
& FW-FIMTDD  & 0.913 (0.009) & 6.267 (0.236) & 0.914 (0.011) & 2.815 (0.179) \\
& FW-KRR     & 0.952 (0.006) & 4.163 (0.218) & 0.935 (0.008) & 2.643 (0.157) \\
& RetroAdj   & \textbf{0.904 (0.004)} & \textbf{2.918 (0.094)} & \textbf{0.901 (0.004)} & \textbf{2.153 (0.085)} \\
\midrule
\multirow{5}{*}{SAOCP}
& FW-AMF     &\textbf{0.905 (0.015)} & 4.928 (0.246) & 0.924 (0.014) & 2.504 (0.207) \\
& FW-AMRules & 0.949 (0.009) & 3.662 (0.236) & \textbf{0.905 (0.016)} & 2.664 (0.215) \\
& FW-FIMTDD  & 0.906 (0.014) & 6.153 (0.295) & \textbf{0.905 (0.020)} & 2.740 (0.245) \\
& FW-KRR     & 0.945 (0.009) & 3.872 (0.244) & 0.927 (0.013) & 2.572 (0.223) \\
& RetroAdj   & 0.913 (0.005) & \textbf{3.039 (0.109)} & 0.909 (0.006) & \textbf{2.272 (0.102)} \\
\midrule
\multirow{5}{*}{SFOGD}
& FW-AMF     & 0.911 (0.010) & 4.991 (0.178) & 0.923 (0.010) & 2.427 (0.137) \\
& FW-AMRules & 0.949 (0.007) & 3.540 (0.144) & 0.907 (0.012) & 2.632 (0.154) \\
& FW-FIMTDD  & 0.906 (0.010) & 6.092 (0.251) & 0.906 (0.015) & 2.710 (0.195) \\
& FW-KRR     & 0.946 (0.007) & 3.755 (0.169) & 0.926 (0.011) & 2.471 (0.142) \\
& RetroAdj   & \textbf{0.902 (0.006)} & \textbf{2.722 (0.091)} & \textbf{0.901 (0.007)} & \textbf{2.074 (0.076)} \\
\bottomrule
\end{tabular}
\end{table}

\begin{figure}[H]
    \centering
    \includegraphics[width=\textwidth]{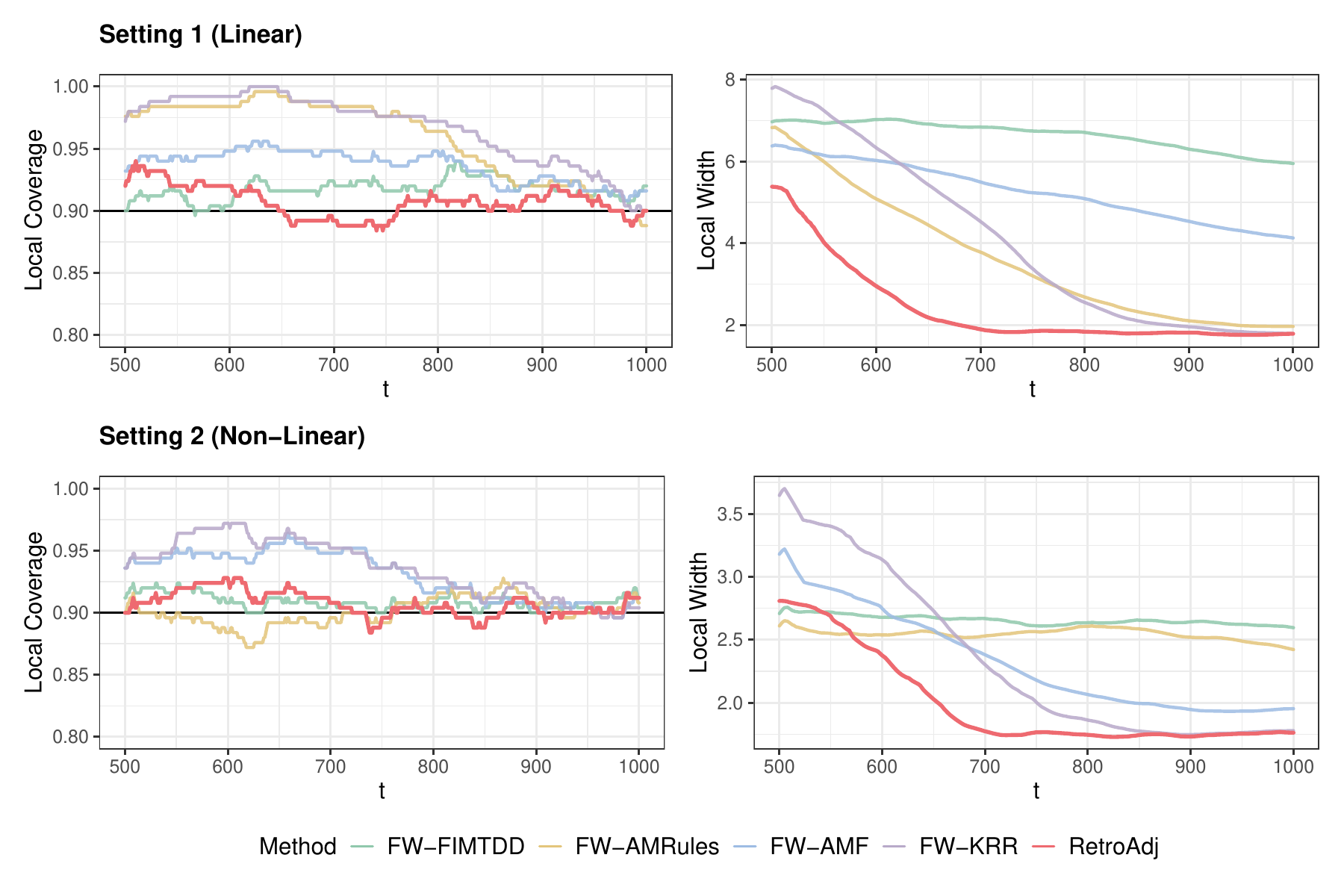}
    \caption{Local coverage and prediction interval width of the proposed RetroAdj and forward online conformal inference methods (FW) for Setting 1 and 2. For all methods, the DtACI algorithm is employed to adjust the miscoverage level.}
    \label{fig::fig_one_rep}
\end{figure}

\section{Real Data Analysis}
\label{sec:realdata}

\paragraph{Datasets}

We evaluate the proposed method on three real-world datasets that represent distinct types of distributional challenges in online prediction tasks. The Communities and Crime (CC) dataset \citep{redmond2002data} captures heterogeneous tabular data with complex feature interactions, providing a setting with potential covariate shifts. The Elec2 dataset \citep{harries1999splice} consists of electricity price time series with frequent and irregular fluctuations, reflecting gradual and recurrent distributional changes. 
The AIG stock price data \citep{nugent2018sp500} exhibit abrupt regime shifts associated with major economic events, representing a challenging scenario with sudden and large-scale distributional changes. 

For the CC data, we aim to predict the real-valued per-capita violent crime rate from 127 input features. We sort all observations in ascending order of the proportion of Black population and use the first 250 observations for training, while the remaining 1,774 samples are sorted in descending order and used as the test set. For the Elec2 data, we reconstruct the univariate time series into a dataset consisting of input-output pairs, where the output $Y_t$ is the current electricity price and the input $X_t = (Y_{t-1}, \dots, Y_{t-10})$ contains the past ten lagged values. For the AIG stock price data,  we use 3,000 observations before and after  the Subprime Mortgage Crisis (2008-09-15) each, during which the stock price exhibits a drastic regime change: its trend reverses sharply and its scale collapses to a much smaller magnitude as shown in \cref{fig:aig}. As we did for Elec2 data, we reconstruct the univariate time series into a dataset consisting of 10-day lagged input-output pairs.

\begin{figure}[H]
    \centering
    \includegraphics[width=0.8\textwidth]{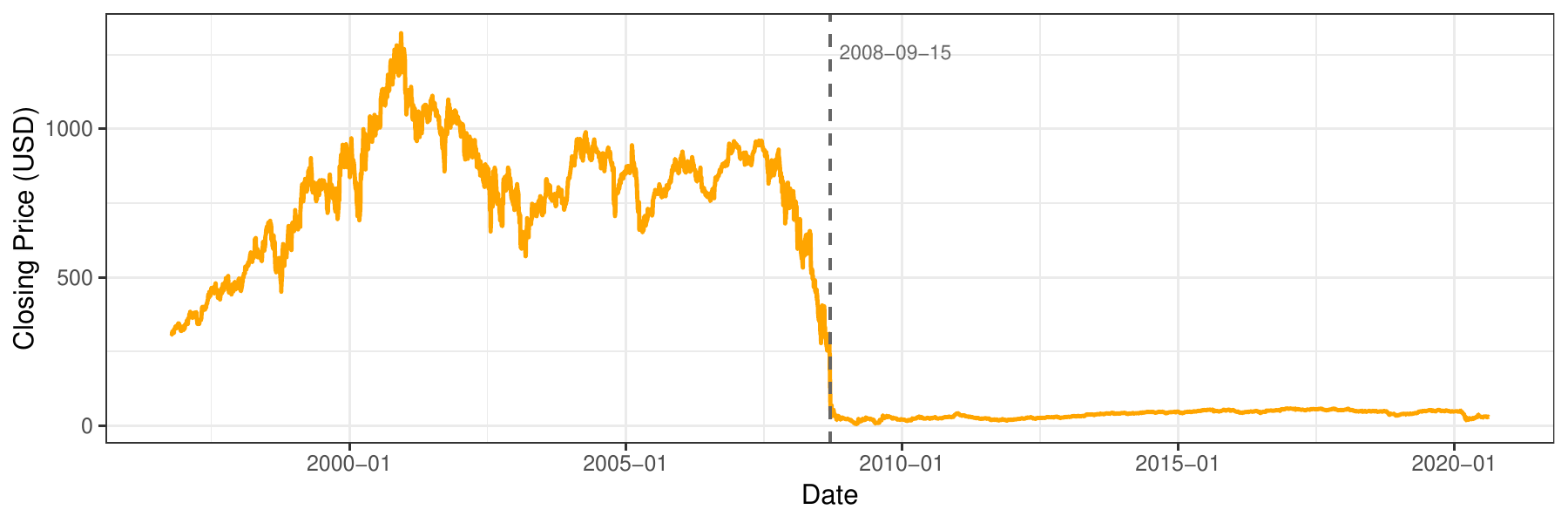}
    \caption{AIG Stock Price.}
    \label{fig:aig}
\end{figure}

\paragraph{Results} \cref{fig:real_world} shows that RetroAdj outperforms the baselines in both coverage and prediction interval width across all datasets. Notably, while all three baseline methods exhibit sharp drops in coverage and substantial increases in interval width when the data distribution changes, followed by gradual recovery. In contrast, RetroAdj maintains a nearly constant coverage level throughout the time horizon, demonstrating strong robustness to distribution shifts with almost no additional cost in predictive interval width.

\begin{figure}[t!]
    \centering
    \begin{subfigure}{\textwidth}
        \includegraphics[width=\textwidth]{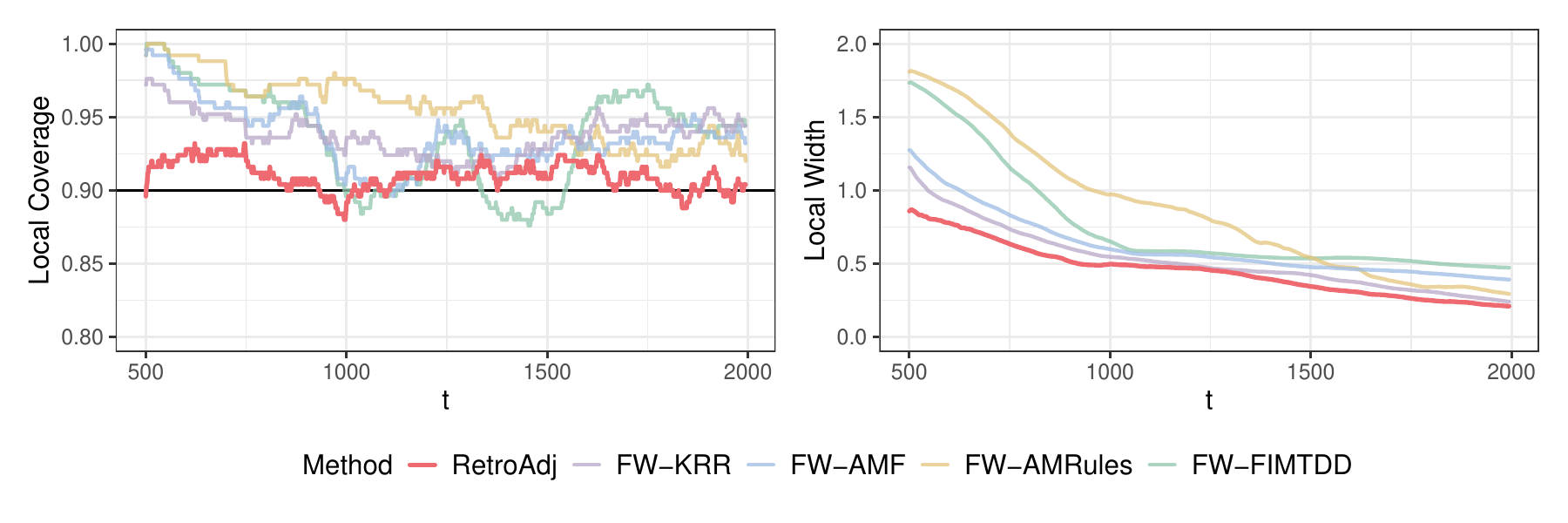}
        \caption{Community and Crime}
    \end{subfigure}\\
    \begin{subfigure}{\textwidth}
        \includegraphics[width=\textwidth]{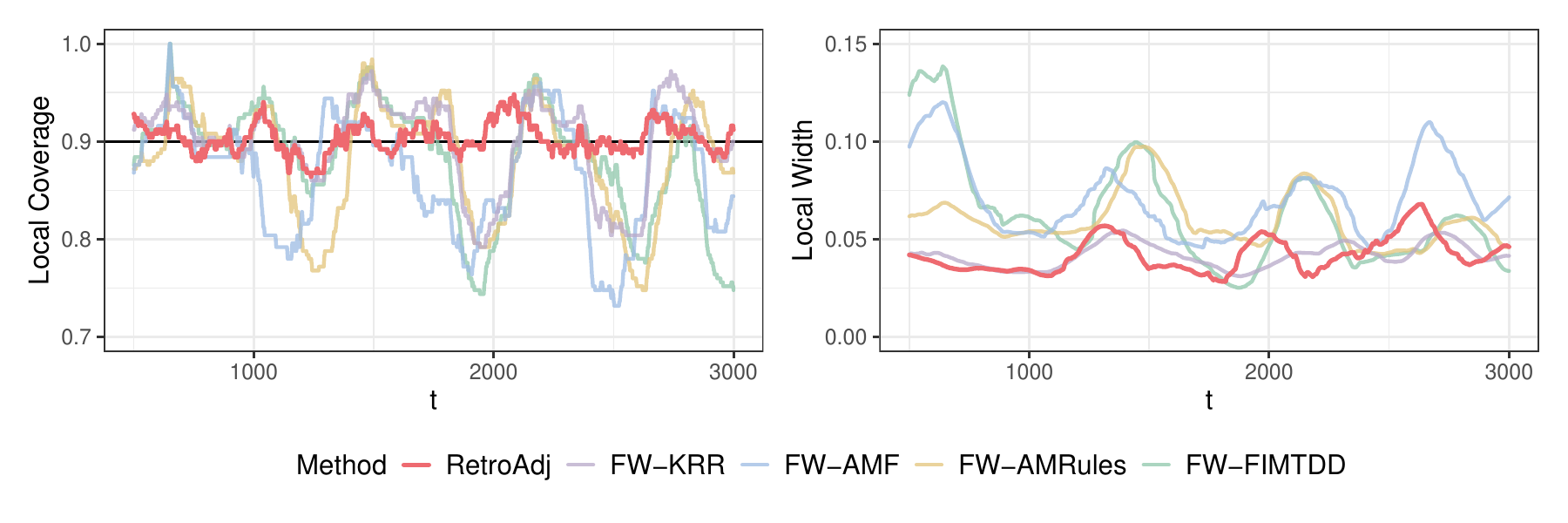}
        \caption{Elec2}
    \end{subfigure}\\
    \begin{subfigure}{\textwidth}
        \includegraphics[width=\textwidth]{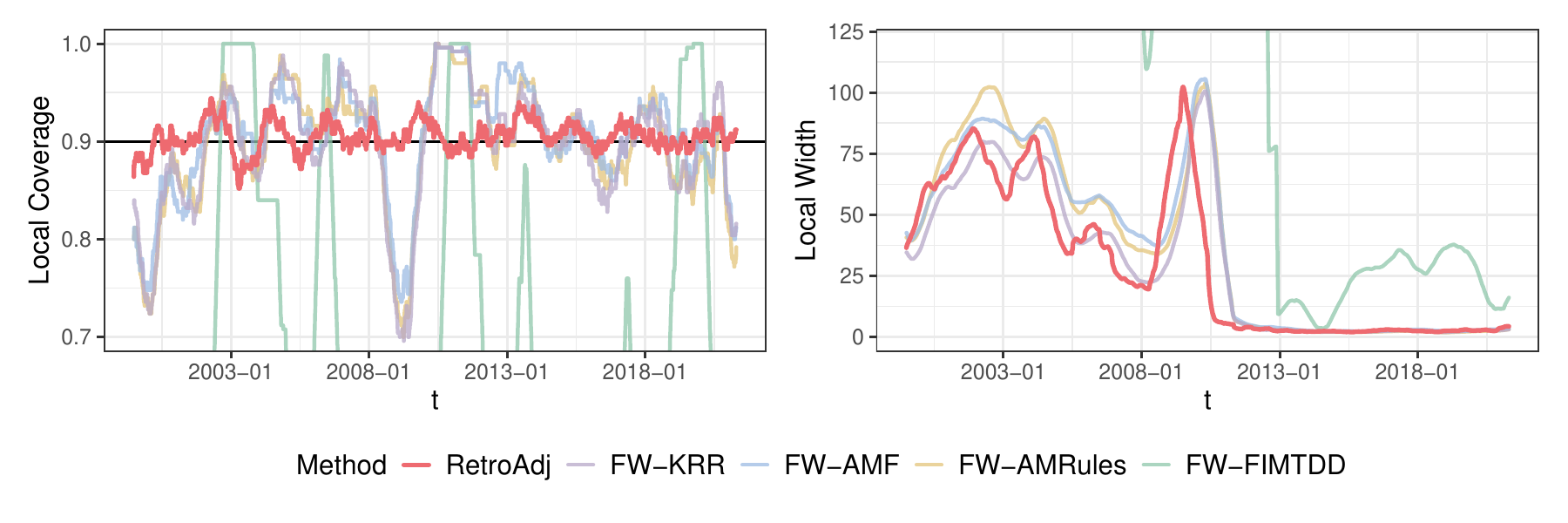}
        \caption{AIG stock price}
    \end{subfigure}
    \caption{Local coverage and prediction interval width of the proposed RetroAdj and forward online conformal inference methods (FW). For all methods, the DtACI algorithm is employed to adjust the miscoverage level.}
    \label{fig:real_world}
\end{figure}

\section{Conclusion}
\label{sec:conclusion}

In this work, we proposed an efficient framework for online conformal inference with \emph{retrospective adjustment}, designed to achieve faster adaptation for evolving data distributions over time.  By leveraging regression approaches with closed-form leave-one-out (LOO) formula, particularly kernel ridge regression (KRR), we developed a computationally tractable procedure that updates all calibration residuals and predictions retrospectively, rather than incrementally appending new ones.  Extensive numerical experiments on both synthetic and real-world data demonstrated that our approach yields faster coverage recovery and tighter prediction intervals than conventional ACI-based methods in online learning with distribution shifts. In addition, the proposed method maintains competitive predictive interval widths during stable periods, incurring little to no additional cost in statistical efficiency. This indicates that the improved adaptivity is achieved without sacrificing performance for stable data streams.

Despite its advantages, the proposed method has several limitations that merit discussion. First, although the proposed method reduces the complexity of retrospective adjustment from $O(n_t^4)$ to $O(n_t^2)$ per step, it remains more computationally demanding than standard online learning algorithms with $O(n_t)$ or constant-time updates. In particular, for high-frequency data streams or large window sizes, the quadratic complexity may still pose a computational bottleneck. As our goal is to achieve improved statistical calibration through retrospective adjustment in a computationally tractable manner rather than to compete with lightweight streaming models in terms of raw throughput, the proposed approach is particularly suitable for applications where statistical reliability and adaptive calibration are prioritized over minimal computational cost. Second, the proposed approach incurs an $O(n_t^2)$ memory cost due to the need to store and update the inverse kernel matrix (or its factorized form). This may become a bottleneck in applications requiring large window sizes, particularly for long-memory processes. 

An important direction for future work is the development of data-driven strategies for selecting the window size $w$. As demonstrated in \cref{appendix:window_size}, the choice of $w$ has a significant impact on the performance: larger windows tend to produce more conservative predictions with wider intervals, while smaller windows can deviates from the nominal level. In the current framework, $w$ is fixed a priori, which limits the applicability of our method. Designing adaptive mechanisms that adjust the window size based on observed data could further improve both robustness and efficiency. Another promising direction is to incorporate the notion of LOO stability \citep{lee2025leave} into our framework, which generalizes the exact LOO formulas to approximate counterparts. By leveraging this concept, our method could be extended beyond linear smoothers to a broader class of regression methods that are stable under small data perturbations. This would further improve the flexibility and applicability of our approach for more complex learning tasks. Lastly, it is also worthwhile to extend the retrospective adjustment principle to efficient approximations of kernel ridge regression, such as random Fourier features \citep{rahimi2007random}, Nyström approximations \citep{williams2001using}, or kernel recursive least squares \citep{van2012kernel}, to further enhance scalability in high-dimensional and large-scale online settings.

\section*{Acknowledgement}
This work was supported by the National Research Foundation of Korea (NRF) funded by the Korea government (MSIT) (RS-2024-00411853 and RS-2026-25486448) and INHA UNIVERSITY Research Grant.

\bibliographystyle{elsarticle-num-names}
\bibliography{_references}

\begin{appendices}
\crefalias{section}{appendix}
\crefalias{subsection}{appendix}

\section{ACI-based Algorithms}
\label[appendix]{appendix:aci_algorithms}

\cref{alg:aci} describes the general ACI procedure. 

\begin{algorithm}[H]
\caption{Adaptive Conformal Inference (ACI)}
\label{alg:aci}
\begin{algorithmic}[1]
\State \textbf{Input:} target miscoverage level $\alpha$, starting value $\alpha_1$, step size $\gamma > 0$.
\For{$t = 1,2,\ldots,T$}
  \State \textbf{Return} prediction interval $\widehat{C}_{t}(\alpha_t)$.
  \State Observe $Y_t$.
  \State Update miscoverage level $\alpha_{t+1} = \alpha_t + \gamma \left\{\alpha - \mathbb{I} (Y_t \notin \widehat C_t(\alpha_t))\right\}.$
\EndFor
\end{algorithmic}
\end{algorithm}

\cref{alg:agaci} describes proposed Online Expert Aggregated ACI (AgACI, \citep{zaffran2022adaptive}), which builds on the Bernstein Online Aggregation method of \citet{wintenberger2017optimal} (denoted as BOA in \cref{alg:agaci}).

\begin{algorithm}[H]
\caption{Aggregated Adaptive Conformal Inference (AgACI)}
\label{alg:agaci}
\begin{algorithmic}[1]
\State \textbf{Input:} target miscoverage level $\alpha$, starting value $\alpha_1$, candidate step sizes $\{\gamma_k\}_{k\in[K]}$.
\State Initialize lower and upper BOA algorithms $\mathcal B^L := \textup{BOA}(\alpha \leftarrow (1-\alpha)/2)$
and $\mathcal B^U := \textup{BOA}(\alpha \leftarrow (1-(1-\alpha)/2))$
\State Initialize $\mathcal A_k = \mathrm{ACI}(\alpha,\gamma_k,\alpha_1)$ for $k \in [K]$.
\For{$t=1,2,\ldots,T$}
  \State Retrieve candidate intervals $\widehat C_t(\alpha_t^k)=[L_t^k,U_t^k]$ from $\mathcal A_k$ for $k \in [K]$.
  \State Compute aggregated bounds $\tilde L_t=\mathcal B^L(\{L_t^k\}_{k\in[K]})$ and $\tilde U_t=\mathcal B^U(\{U_t^k\}_{k\in[K]})$.
  \State \textbf{Return} prediction interval $\bigl[ \tilde L_t, \tilde U_t \bigr]$.
  \State Observe $Y_t$ and update $\mathcal A_k$ for $k \in [K]$, $\mathcal B^L$ and  $\mathcal B^U$.
\EndFor
\end{algorithmic}
\end{algorithm}

\cref{alg:dtaci} describes Dynamically-Tuned Adaptive Conformal Inference (DtACI, \citep{gibbs2024dtaci}). DtACI is built upon an alternative perspective of the ACI algorithm, which can be viewed as a gradient descent step applied to the pinball loss defined as $\ell(\theta;\beta) = \alpha (\beta - \theta) - \min \{ 0, \beta - \theta \}$ for $\theta\in\R$ and $\beta\in[0,1]$.  If we define $\beta_t$ as
\begin{align*}
    \beta_t := \sup \{ \beta\in[0,1] : Y_t \in \widehat{C}_t(\beta) \},
\end{align*}
which is the largest miscoverage level such that $Y_t$ lies within $\widehat{C}_t(\beta)$, the ACI update can be equivalently written as  $ \alpha_{t+1} = \alpha_t - \gamma \nabla_{\theta} \ell (\alpha_t;\beta_t).$

\begin{algorithm}[H]
\caption{Dynamically-Tuned Adaptive Conformal Inference (DtACI)}
\label{alg:dtaci}
\begin{algorithmic}[1]
\State \textbf{Input:} target miscoverage level $\alpha$, starting value $\alpha_1$, candidate step sizes $\{\gamma_k\}_{k\in[K]}$, parameters $\{\sigma_t\}_{t\in[T]},\{\eta_t\}_{t\in[T]}$.
\State Initialize $\mathcal A_k = \mathrm{ACI}(\alpha,\gamma_k,\alpha_1)$ for $k \in [K]$.
\For{$t=1,2,\ldots,T$}
  \State  Compute $p_t^k = w_t^k \big/ \sum_{i=1}^K w_t^i$ for all $k\in[K]$.
  \State  Compute  $\alpha_t =\sum_{k=1}^K \alpha_t^k\, p_t^k$.
  \State \textbf{Return} prediction interval $\widehat{C}_{t}(\alpha_t)$.
   \State Observe $Y_t$ and update $\mathcal A_k$ for $k \in [K]$.
  \State Compute $\beta_t$ and set $\bar w_t^k = w_t^k \exp \, \bigl(-\eta_t\,\ell(\alpha_t^k;\beta_t)\bigr)$ for $k\in[K]$.
  \State  Compute $w_{t+1}^k = (1-\sigma_t)w_t^k + \bar W_t\,\sigma_t/K$ with $\bar W_t := \sum_{i=1}^K w_t^i$.
\EndFor
\end{algorithmic}
\end{algorithm}

\citet{bhatnagar2023improved} proposed  Strongly Adaptive Online Conformal Prediction (SAOCP, \cref{alg:saocp}), which uses Scale-Free Online Gradient Descent (SFOGD, \cref{alg:sfogd}) as base experts and then aggregates the experts via a meta-algorithm proposed by \citet{jun2017improved}. Unlike the original SFOGD and SAOCP algorithms, which were developed for width-based constructors, we adapt them here to support the quantile-based construction of prediction sets employed in the proposed method. 

\begin{algorithm}[H]
\caption{Modified version of the SFOGD}
\label{alg:sfogd}
\begin{algorithmic}[1]
\State \textbf{Input:} target miscoverage level $\alpha$, starting value $\alpha_1$, step size $\gamma > 0$.
\For{$t = 1,2,\ldots,T$}
  \State \textbf{Return} prediction interval $\widehat{C}_{t}(\alpha_t)$.
  \State Observe $Y_t$ and compute $\beta_t$.
  \State Update $\alpha_{t+1} = \alpha_t - \gamma \nabla_{\theta} \ell ( \alpha_t;\beta_t)/\sqrt{\sum_{s=1}^{t} \| \nabla_{\theta}\ell(\alpha_s;\beta_s)\|_2^2}$.
\EndFor
\end{algorithmic}
\end{algorithm}

In \cref{alg:saocp},  $\lfloor x \rfloor$ denotes the largest integer less than or equal to $x$ and $[x]_+:=\max\{0,x\}$ does the positive part of $x$ for a real number $x$. Moreover, $\Delta^t:=\{(p_1,\dots, p_t)\in[0,1]^d: \sum_{i=1}^t p_i=1\}$ denotes the $t$-dimensional probability simplex.
 
\begin{algorithm}[H]
\caption{Modified version of the SAOCP}
\label{alg:saocp}
\begin{algorithmic}[1]
\State \textbf{Input:} target miscoverage level $\alpha$, starting value $\alpha_1$, step size $\gamma > 0$, lifetime multiplier $g\in\bN$.
\For{$t = 1,2,\ldots,T$}
  \State Initialize $\mathcal{A}_t = \text{SFOGD}(\alpha \leftarrow \alpha, \gamma \leftarrow \gamma, \alpha_1 \leftarrow \alpha_{t-1})$ with $w_t^t = 0$.
  \State Compute $\mathrm{Active}(t) = \{ i \in [T] : t - L(i) < i \le t \}$ where $L(i) := g \cdot \max_{n \in \mathbb{Z}} \{ 2^n : i \equiv 0 \mod 2^n \}$
  \State Compute probability $\hat{p}_i \propto \pi_i [w_{t}^i]_+$, where $\pi_i \propto i^{-2}(1 + \lfloor \log_2 i \rfloor)^{-1} \ind(i \in \mathrm{Active}(t))$.
  \State Set $\alpha_t = \sum_{i \in \mathrm{Active}(t)} p_i \alpha_t^i$ for $t \ge 2$, and $\alpha_t = 0$ for $t=1$.
  \State \textbf{Return} prediction set $\widehat{C}_{t}(\alpha_t)$.
  \State Observe $Y_t$ and compute $\beta_t$.
  \For{$i \in \mathrm{Active}(t)$}
    \State Update expert $\mathcal{A}_t$ with $Y_t$ and obtain $\alpha_{t+1}^i$.
    \State Compute    $g_t^i=\{\ell(\alpha_t;\beta_t) - \ell(\alpha_t^i;\beta_t)\}\ind(w_t^i > 0)+\big[\ell(\alpha_t;\beta_t) - \ell(\alpha_t^i;\beta_t)\big]_+\ind(w_t^i \le 0) $.
    \State Update expert weight $w_{t+1}^i = \frac{1}{t - i + 1} \left( \sum_{j=i}^t g_j^i \right) \left( 1 + \sum_{j=i}^t w_j^i g_j^i \right).$
  \EndFor
\EndFor
\end{algorithmic}
\end{algorithm}

\section{Proof of \cref{thm:longterm_coverage}}
\label{proof:longterm_coverage}

\paragraph{Proof for ACI and DtACI}

By definition, when $\alpha_t<0$ then $\ind(Y_t \notin\widehat{C}^{\mathrm{RA}}_{t}(\alpha_t))=0$ and when $\alpha_t>1$ then $\ind(Y_t \notin\widehat{C}^{\mathrm{RA}}_{t}(\alpha_t))=1$ always. Hence, the same conclusion of Lemma 4.1 of \citet{gibbs2021aci} follows here and thus the desired result follows by applying the argument of Proposition 4.1 of \citet{gibbs2021aci} for ACI and Theorem 6 of \citet{gibbs2024dtaci} for DtACI. 

\begin{remark}
The proposed RetroAdj method employs empirical quantiles and therefore does not satisfy the continuity assumption used in the idealized ACI formulation of \citet{gibbs2021aci}. However, this observation does not invalidate our theoretical result, since \cref{thm:longterm_coverage} in our paper relies only on the long-run coverage argument corresponding to Lemma 4.1 and Proposition 4.1 of \citet{gibbs2021aci}, rather than on the later marginal-coverage deviation analysis, which requires the continuity of the quantile function.
\end{remark}

\paragraph{Proof for SFOGD}

\cite{bhatnagar2023improved} originally proposed the SFOGD algorithm as a width-based interval constructor as we have explained in \cref{appendix:aci_algorithms}. They assumed that the interval radius $\varphi_t$ remain bounded, and proved the coverage guarantee under setting by iteratively updating the radius $\varphi_t$. In contrast, our method dynamically adjusts the quantile parameter $\alpha_t$ instead of the radius, and thus we slightly modify the proof accordingly. Let $\textup{err}_t=\ind(Y_t \notin\widehat{C}^{\mathrm{RA}}_{t}(\alpha_t))$ to simplify the notation.

\begin{lemma} (Boundedness of quantile parameter $\alpha_t$ in SFOGD)
\label{lemma::sfogd_bdd}
For any $t \in \mathbb{N},$ we have $\alpha_t \in [-\gamma, 1 + \gamma]$   with probability one.
\end{lemma}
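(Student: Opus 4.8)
The plan is to prove the claimed two-sided bound by a single pathwise induction on $t$ (so the conclusion in fact holds surely, hence with probability one), resting on two elementary facts about the scale-free update in \cref{alg:sfogd}. I would first record the (sub)gradient of the pinball loss. Writing $g_s := \nabla_\theta \ell(\alpha_s;\beta_s)$, a direct case analysis of $\ell(\theta;\beta) = \alpha(\beta-\theta) - \min\{0,\beta-\theta\}$ shows that $g_s = -\alpha$ when $\alpha_s < \beta_s$ and $g_s = 1-\alpha$ when $\alpha_s > \beta_s$, with the subgradient lying in $[-\alpha,1-\alpha]$ at the kink $\alpha_s = \beta_s$. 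Two consequences matter: the gradient is bounded, $|g_s| \le \max\{\alpha,1-\alpha\} \le 1$, and, crucially, its sign is determined by the position of $\alpha_s$ relative to $\beta_s$.

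Next I would isolate two ingredients. The first is a \emph{single-step bound}: since $g_t^2 \le \sum_{s=1}^t g_s^2$, the normalizing denominator dominates the numerator, so
\begin{align*}
    |\alpha_{t+1} - \alpha_t| = \gamma\,\frac{|g_t|}{\sqrt{\sum_{s=1}^t g_s^2}} \le \gamma,
\end{align*}
whenever the denominator is positive, the degenerate case of all gradients vanishing up to time $t$ being handled by the convention that the update is then a null step. This is exactly where the scale-free normalization does its work: the increment is capped at $\gamma$ irrespective of the magnitude of the gradients. The second ingredient is a \emph{reflection property}, which uses $\beta_t \in [0,1]$ (immediate from its definition as a supremum over $[0,1]$): if $\alpha_t > 1$ then $\alpha_t > \beta_t$, so $g_t = 1-\alpha > 0$ and the update strictly decreases $\alpha_t$; symmetrically, if $\alpha_t < 0$ then $\alpha_t < \beta_t$, so $g_t = -\alpha < 0$ and the update strictly increases $\alpha_t$. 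In other words, once an iterate leaves $[0,1]$ it is pushed back toward it.

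I would then combine these in an induction, under the natural initialization $\alpha_1 \in [0,1]$. For the upper bound, suppose $\alpha_t \le 1+\gamma$. If $\alpha_t \le 1$, the single-step bound gives $\alpha_{t+1} \le \alpha_t + \gamma \le 1+\gamma$; if $1 < \alpha_t \le 1+\gamma$, the reflection property gives $\alpha_{t+1} < \alpha_t \le 1+\gamma$. Since these two cases are exhaustive, the upper bound propagates. The lower bound $\alpha_t \ge -\gamma$ follows by the mirror-image argument, and together they yield $\alpha_t \in [-\gamma, 1+\gamma]$ for every $t$.

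The argument is short, so I do not expect a serious obstacle; the step I would be most careful with is the interplay of the two ingredients at the boundary—verifying that the single-step cap of $\gamma$ and the inward-pushing sign of the gradient are precisely matched so that no iterate can overshoot the band $[0,1]$ by more than $\gamma$—together with cleanly dispatching the degenerate $0/0$ case and the choice of subgradient at the kink $\alpha_t = \beta_t$, neither of which can enlarge an increment beyond $\gamma$.
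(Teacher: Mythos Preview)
Your proof is correct and essentially matches the paper's: both establish the single-step bound $|\alpha_{t+1}-\alpha_t|\le\gamma$ from the scale-free normalization and then invoke the reflection-plus-induction argument of Lemma~4.1 of \citet{gibbs2021aci}, which you have simply spelled out in full rather than cited. Your phrasing of the reflection via $\beta_t\in[0,1]$ is equivalent to the paper's phrasing via the identity $\nabla_\theta\ell(\alpha_t;\beta_t)=\textup{err}_t-\alpha$ together with the convention on $\widehat{C}_t(\alpha_t)$ for $\alpha_t\notin[0,1]$.
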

\begin{proof}
    Note that $\sup_t |\alpha_{t+1} - \alpha_t| = \sup_t \gamma \left| (\textup{err}_t -  \alpha)/\sqrt{\sum_{s=1}^t (\textup{err}_s - \alpha)^2} \right| \leq \gamma.$
Thus, the desired result follows by the same argument of the proof of Lemma 4.1 of \cite{gibbs2021aci}.
\end{proof}

With the above lemma in hand, we can obtain the following non-asymptotic error bound on the long-run coverage, which concludes the desired result for the SFOGD.

\begin{theorem}[Modified version of Theorem 4.2 of \cite{bhatnagar2023improved}]
\label{thm::sfogd_cov}
\cref{alg:sfogd} with any learning rate $\gamma = \Theta(1)$ and any initialization $\alpha_1 \in (0, 1)$ achieves $ \left| \frac{1}{T}\sum_{t=1}^T \textup{err}_t -\alpha \right| \leq O(\alpha^{-2}T^{-1/4}\log T)$ with probability one.
\end{theorem}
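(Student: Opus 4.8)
The plan is to reduce the theorem to the two results already in hand, \cref{lemma::sfogd_bdd} and \cref{lemma:sfogd_errbound}, by first rewriting the SFOGD update purely in terms of the miscoverage errors $\textup{err}_t$ and then checking the hypothesis of \cref{lemma:sfogd_errbound}.

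First I would establish the gradient identity $\nabla_\theta \ell(\alpha_t;\beta_t) = \textup{err}_t - \alpha$. A direct computation of the (sub)gradient of the pinball loss gives $\nabla_\theta \ell(\theta;\beta) = -\alpha + \ind(\theta > \beta)$. Since the quantile-based set $\widehat{C}_t^{\mathrm{RA}}(\beta)$ is nonincreasing in $\beta$ (a larger miscoverage target lowers the quantile level $(1-\beta)(1+1/n_t)$ and hence narrows the interval), the definition $\beta_t = \sup\{\beta\in[0,1]: Y_t \in \widehat{C}_t^{\mathrm{RA}}(\beta)\}$ yields $Y_t \in \widehat{C}_t^{\mathrm{RA}}(\alpha_t) \iff \alpha_t \le \beta_t$, so that $\textup{err}_t = \ind(\alpha_t > \beta_t)$; the boundary conventions for $\alpha_t \notin[0,1]$ keep this consistent in the degenerate cases. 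Substituting gives $\nabla_\theta \ell(\alpha_t;\beta_t) = \textup{err}_t - \alpha$, and since this gradient is scalar the normalizer $\sqrt{\sum_{s=1}^t \|\nabla_\theta\ell(\alpha_s;\beta_s)\|_2^2}$ collapses to $\sqrt{\sum_{s=1}^t (\textup{err}_s - \alpha)^2}$, which is strictly positive because each term equals $\alpha^2$ or $(1-\alpha)^2$. Thus the update in \cref{alg:sfogd} reads $\alpha_{t+1} = \alpha_t - \gamma(\textup{err}_t - \alpha)/\sqrt{\sum_{s=1}^t(\textup{err}_s-\alpha)^2}$, exactly the form appearing in the two lemmas.

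Next I would verify the hypothesis of \cref{lemma:sfogd_errbound}, namely a uniform bound $M$ on all partial sums. The key observation is that the rewritten update telescopes: for any integers $0 \le t_0 \le t_f \le T$,
\begin{align*}
  \gamma \sum_{t=t_0+1}^{t_f} \frac{\textup{err}_t - \alpha}{\sqrt{\sum_{s=1}^t(\textup{err}_s-\alpha)^2}} = \sum_{t=t_0+1}^{t_f}(\alpha_t - \alpha_{t+1}) = \alpha_{t_0+1} - \alpha_{t_f+1}.
\end{align*}
By \cref{lemma::sfogd_bdd} every iterate lies in $[-\gamma, 1+\gamma]$ with probability one, so the right-hand side is at most $1 + 2\gamma$ in absolute value, giving the hypothesis with $M = (1+2\gamma)/\gamma$. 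Since $\gamma = \Theta(1)$ we have $M = O(1)$. Feeding this $M$ into \cref{lemma:sfogd_errbound} gives $|T^{-1}\sum_{t=1}^T(\textup{err}_t - \alpha)| \le 2(M + 1 + \alpha^{-2}\log T)T^{-1/4}$, and since $\alpha^{-2}\log T \ge 1$ for $T \ge e$ the $O(1)$ terms are absorbed into $\alpha^{-2}\log T$, yielding the claimed $O(\alpha^{-2}T^{-1/4}\log T)$ bound.

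I expect the main obstacle to be the first step, pinning down the gradient identity cleanly: verifying the monotonicity of the quantile-based set in $\beta$ and checking that the boundary conventions for $\alpha_t \notin [0,1]$ make $\textup{err}_t = \ind(\alpha_t > \beta_t)$ hold even on the degenerate events, so that the ``with probability one'' qualifier (arising from ties at $\alpha_t = \beta_t$ and from \cref{lemma::sfogd_bdd}) is properly justified. Once the update is in this scalar form, the telescoping bound and the appeals to the two lemmas are routine.
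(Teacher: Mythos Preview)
Your proposal is correct and follows essentially the same route as the paper: rewrite the SFOGD update via the gradient identity $\nabla_\theta\ell(\alpha_t;\beta_t)=\textup{err}_t-\alpha$, telescope to bound the partial sums by $M=(1+2\gamma)/\gamma$ using \cref{lemma::sfogd_bdd}, and then invoke \cref{lemma:sfogd_errbound}. If anything, you are more careful than the paper in justifying the gradient identity and the monotonicity of $\widehat{C}_t^{\mathrm{RA}}(\beta)$ in $\beta$; the paper simply asserts $\nabla_\theta\ell(\alpha_t;\beta_t)=\textup{err}_t-\alpha$ and proceeds.
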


\begin{proof}
Since $ \nabla_{\theta}\ell(\alpha_t;\beta_t)  = \textup{err}_t - \alpha,$ 
the SFOGD update rule can be expressed as
\begin{align*}
    \alpha_{t+1} = \alpha_t +\gamma \frac{\textup{err}_t - \alpha}{\sqrt{\sum_{s=1}^{t}(\textup{err}_s-\alpha)^2}} = \alpha_1 + \gamma \sum_{s=1}^t \frac{\textup{err}_s - \alpha}{\sqrt{\sum_{i=1}^{s}(\textup{err}_i-\alpha)^2}}
\end{align*}
Note that we have $\alpha_{t+1} \in [-\gamma, 1+\gamma]$ for all $t \geq 0$ by \cref{lemma::sfogd_bdd}, which implies that
\begin{align*}
      \left| \sum_{t=t_{0}+1}^{t_f} \frac{\textup{err}_t - \alpha}{\sqrt{\sum_{s=1}^{t}(\textup{err}_s-\alpha)^2}} \right| = \frac{1}{\gamma}|\alpha_{t_{f}+1} - \alpha_{t_{0}+1}| \leq \frac{1 + 2\gamma}{\gamma}
\end{align*}
Therefore, 
by Lemma B.2 of Appendix B in \cite{bhatnagar2023improved} with  $a_t = \textup{err}_t - \alpha$ and $M = \frac{1 +2\gamma}{\gamma}$, we have
\begin{align*}
    \left| \frac{1}{T}\sum_{t=1}^T \textup{err}_t -\alpha \right| \leq 2((1 +3\gamma)/\gamma + \alpha^{-2}\log T)T^{-1/4} =O(\alpha^{-2}T^{-1/4}\log T)
\end{align*}
for any $\gamma = \Theta(1).$
\end{proof}

\paragraph{Proof for SAOCP} For the long-run coverage result for the proposed method applied with the SAOCP, we need a suitable assumption on the quantity $S_{\beta}(T)$, the measure of smoothness of the expert weights and the cumulative gradient norms for each individual expert. See Theorem B.3 of \cite{bhatnagar2023improved} for detailed definition of $S_{\beta}(T)$. Due to the theorem given below, if there exists $\beta \in (1/2, 1)$ such that $S_{\beta}(T) = O(T^{\xi})$ for some  $\xi \in(0, 1-\beta)$ up to a polylogarithmic factor, then we get the desired result.

\begin{theorem}[Modified version of Theorem 4.3 of \cite{bhatnagar2023improved}] Consider a modified version of \cref{alg:saocp} where line 8 is replaced by sampling an expert $i \sim p$. Then, for any learning rate $\gamma = \Theta(1)$ and any initialization $\alpha_1 \in (0, 1)$, we have
    \begin{align*}
    \left| \frac{1}{T} \sum_{t=1}^T \E[\textup{err}_t] - \alpha \right| = O\del{\inf_{\beta \in (1/2, 1)} \left\{ T^{1/2-\beta} + T^{\beta-1} S_{\beta}(T) \right\}}
    \end{align*} 
with probability one, where the expectation is taken over the randomness of sampling an expert.
\end{theorem}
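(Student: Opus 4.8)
The plan is to mirror the modified SFOGD argument of \cref{thm::sfogd_cov} and adapt the proof of Theorem 4.3 of \cite{bhatnagar2023improved}, exploiting the same gradient identity that drove the SFOGD case. The crucial observation is that the subgradient of the pinball loss with respect to the quantile parameter satisfies $\nabla_{\theta}\ell(\alpha_t;\beta_t)=\textup{err}_t-\alpha$, so that the average coverage error $\frac{1}{T}\sum_{t=1}^T(\textup{err}_t-\alpha)$ is exactly the time-average of the gradients fed to each SFOGD expert inside the SAOCP meta-algorithm. Consequently, once the cumulative gradients of the aggregated predictor are controlled, the coverage error is controlled, and the width-versus-quantile distinction enters only through the range of the parameter being updated.

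First I would establish that the returned quantile level stays uniformly bounded. Each expert $\mathcal{A}_t^i$ is an instance of SFOGD, so by \cref{lemma::sfogd_bdd} its parameter satisfies $\alpha_t^i\in[-\gamma,1+\gamma]$ with probability one. In the modified algorithm the returned level $\alpha_t$ equals the parameter $\alpha_t^i$ of the sampled expert $i\sim p$ (and in the unsampled version it is the convex combination $\sum_{i\in\mathrm{Active}(t)}p_i\alpha_t^i$ with $p\in\Delta^t$); in either case $\alpha_t\in[-\gamma,1+\gamma]$. This plays the role that the assumed boundedness of the interval radius $\varphi_t$ plays in the original width-based proof, and it is what lets us replace the bounded-radius hypothesis of \cite{bhatnagar2023improved} by a conclusion that holds automatically under the quantile parametrization.

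Next I would import the strongly adaptive regret guarantee of the SAOCP meta-algorithm (Theorem B.3 of \cite{bhatnagar2023improved}, built on the coin-betting aggregator of \cite{jun2017improved}), which bounds the regret of the aggregated iterate against any single expert over any contiguous interval in terms of the smoothness quantity $S_\beta(T)$ and the per-expert cumulative gradient norms. Because the gradients $\textup{err}_t-\alpha$ are bounded in magnitude by $1$, these cumulative norms are controlled, and the regret bound translates, via the same telescoping and summation manipulation used in \cref{thm::sfogd_cov}, into the stated bound $O\del{\inf_{\beta\in(1/2,1)}\{T^{1/2-\beta}+T^{\beta-1}S_\beta(T)\}}$ on the average coverage error. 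The expectation appears precisely because line 8 is replaced by sampling a single expert $i\sim p$, so $\E[\textup{err}_t]$ is taken over this algorithmic randomness while the inequality itself holds for every realization of the data, yielding the almost-sure statement.

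The main obstacle I anticipate is the regret-to-coverage translation in the last step: one must verify carefully that the original argument, which bounds the coverage error through the bounded radius and the strongly adaptive regret, goes through verbatim after substituting the bounded quantile parameter together with the identity $\nabla_{\theta}\ell(\alpha_t;\beta_t)=\textup{err}_t-\alpha$, and that the surrounding assumption $S_\beta(T)=O(T^\xi)$ for some $\xi\in(0,1-\beta)$ indeed makes the right-hand side vanish as $T\to\infty$. The algebraic manipulations of the cumulative gradients are routine given \cref{lemma::sfogd_bdd} and the structure exploited in \cref{thm::sfogd_cov}, so the genuine work lies in checking that every place where \cite{bhatnagar2023improved} invoked the width parametrization can be faithfully reproduced under the quantile parametrization.
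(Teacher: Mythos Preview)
Your proposal is correct and follows essentially the same approach as the paper's proof. The paper's argument is even more terse: it invokes \cref{lemma::sfogd_bdd} to conclude that the increments $|\alpha_{t+1}-\alpha_t|$ are uniformly bounded by $1+2\gamma$ (which is immediate from the range bound $\alpha_t\in[-\gamma,1+\gamma]$ you also use), and then simply states that the proof of Theorem~B.3 in \cite{bhatnagar2023improved} carries over verbatim once this replaces the bounded-radius hypothesis of the width-based version.
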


\begin{proof}
By \cref{lemma::sfogd_bdd}, we have $ |\alpha_{t+1} - \alpha_t| /\gamma\leq (1 + 2\gamma)/\gamma$
Since $(1 + 2\gamma)/\gamma$ is a fixed constant independent of $t$, the sequence $\{ \alpha_t \}_{t \in [T]}$ is uniformly bounded in its increments. Hence, the proof follows by the same argument of the proof of Theorem B.3 of Appendix B in \cite{bhatnagar2023improved}. 
\end{proof}

\section{Proofs of \cref{lemma:loopred,lemma:krr_downdate,lemma:krr_update}}

\subsection{Proof of \cref{lemma:loopred}}
\label{proof:loopred}

Define $Y_i^\dag:=\hat{f}_{[n]\setminus \{i\}}(X_i)$ and $Y_j^\dag:=Y_j $ for $j\neq i$. Then by the self-stable property, $\hat{f}_{[n]\setminus \{i\}}$ is equal to the linear smoother, say $\hat{f}_{[n]}^\dag$, trained on the ``perturbed'' data set $\{(X_j, Y_j^\dag)\}_{j\in[n]}$. Since the feature vectors of this perturbed data set are equal to those of the original data set $\{(X_j, Y_j)\}_{j\in[n]}$, we have
    \begin{align*}
        \hat{f}_{[n]\setminus \{i\}}(x)
        =\hat{f}_{[n]}^\dag(x)
        &=\xi_n(x,X_{1:n})^\top Y_{1:n}^\dag\\
        &=\xi_n(x,X_{1:n})^\top Y_{1:n}+\xi_{n}^{i}(x)(Y_i^\dag-Y_i)\\
          &=\hat{f}_{[n]}(x)-\xi_{n}^{i}(x)(Y_i- \hat{f}_{[n]\setminus \{i\}}(X_i)).
    \end{align*}
\cref{lemma:looresid} concludes the result.

\subsection{Proof of \cref{lemma:krr_downdate}}
\label{proof:krr_downdate}
Let $\kappa_o := \kappa(X_{t-1-w}, X_{t-1-w}) $ and $  u_o := (\kappa(X_{t-1-w},X_{i}))_{i\in \cI(t)}.$ 
Then, the regularized kernel matrix $H^{(t)} := K^{(t)} +\lambda I$ can be partitioned as
\begin{align*}
    H^{(t)} = K^{(t)} + \lambda I = \begin{pmatrix}
        \kappa_{o} & u_{o}^\top \\
        u_{o} & \check H^{(t)}
    \end{pmatrix}, \quad (H^{(t)})^{-1} = Q^{(t)} = 
    \begin{pmatrix}
        q_{11} & q_{12}^\top \\
        q_{12} & Q_{22}
    \end{pmatrix},
\end{align*}
where $\check H^{(t)} := \check K^{(t)} + \lambda I$. By the block matrix inversion formula  \citep[e.g., Theorem 2.1 of ][]{lu2002inverses}, we have
\begin{align*}
    (H^{(t)})^{-1} =
    \begin{pmatrix}
        \delta_{o}^{-1} & -\delta_{o}^{-1} u_{o}^\top (\check H^{(t)})^{-1} \\[5pt]
        -(\check H^{(t)})^{-1} u_{o} \, \delta_{o}^{-1} &
        (\check H^{(t)})^{-1}
        + (\check H^{(t)})^{-1}
          u_{o} \, \delta_{o}^{-1} u_{o}^\top
          (\check H^{(t)})^{-1}
    \end{pmatrix},
\end{align*}
where we define $\delta_o := \kappa_o - u_o^{\top}(\check{H}^{(t)})^{-1}u_o$.
From the result, it follows that
\begin{align*}
    q_{11} &= \delta_o^{-1}, \\
    q_{12} &= -(\check H^{(t)})^{-1} u_o \, \delta_o^{-1} \\
    Q_{22} &= (\check H^{(t)})^{-1} + (\check H^{(t)})^{-1} u_{o} \, \delta_{o}^{-1} u_{o}^\top (\check H^{(t)})^{-1}.
\end{align*}
Rearranging the above identity yields
\begin{align*}
    Q_{22} = (\check H^{(t)})^{-1} + \frac{1}{q_{11}}q_{12} q_{12}^\top,
\end{align*}
from which the desired result immediately follows.

\subsection{Proof of \cref{lemma:krr_update}}
\label{proof:krr_update}
The regularized kernel matrix $K^{(t+1)} + \lambda I$ can be partitioned as
\begin{align*}
    K^{(t+1)} + \lambda I = \begin{pmatrix}
        \check K^{(t)}+\lambda I & \uk \\
        \uk & 1 + \lambda
    \end{pmatrix}.
\end{align*}
Applying the block matrix inversion formula \citep[e.g., Theorem 2.1 of ][]{lu2002inverses} to the partition above immediately yields the desired result.

\section{Implementation Details}
\label{appendix:implementation}

The hyperparameters for FIMT-DD and AMRules were set to the default values provided in RMOA package \citep{wijffels2025rmoa}, while those for AMF were set to the default values in the \texttt{river} package \citep{montiel2021river}. Since FIMT-DD, AMRules and AMF are all designed for online learning, hyperparameter tuning is not generally critical, as these methods adapt automatically to evolving data streams. The hyperparameters for the ACI-based algorithms were set to the default values provided in the original papers. We fix the step size $\gamma = 0.005$ for ACI, SFOGD and SAOCP, while consider multiple values $\gamma\in\{0.001, 0.002, 0.004, 0.008, 0.016, 0.032, 0.064, 0.128 \}$ for AgACI and DtACI. For DtACI, we set $\sigma_t = 1/2L$, $\eta_t = \sqrt{\frac{\log(8L)+2}{\sum_{s=t-L}^{t-1} \ell (\alpha_s;\beta_s)}}$ where $ L:= T - t_{\textup{init}}$. For SAOCP, we fix lifetime multiplier $g = 8$.
All experiments were conducted in \texttt{R} (version 4.4.3), using \texttt{RMOA} version 1.1.0 and \texttt{rJava} version 1.0-11, except for AMF, which was implemented in \texttt{Python} (version 3.12.7) using \texttt{river} version 0.23.0. The experiments were run on a machine equipped with an Intel Core i9-13900K CPU and 64\,GB RAM. No additional scaling was applied to Elec2 or AIG, and the Communities and Crime data were used as provided with numeric attributes normalized to $[0,1]$.

\section{Additional Experiments}

\subsection{The Effect of Window Size}
\label{appendix:window_size}

\cref{fig:window_sensitivity} illustrates the effect of the window size $w$ on coverage and interval width. We observe that when $w$ is too small, the empirical coverage deviates from the nominal level, indicating insufficient calibration due to limited sample size in the calibration set. We also observe that when $w$ is too large, the method tends to produce more conservative predictions with wider intervals. This behavior is likely due to the inclusion of outdated observations, which biases the calibration set away from the current distribution. In contrast, moderate window sizes yield coverage closer to the nominal level while maintaining shorter intervals, indicating a better balance between calibration and efficiency.  Overall, RetroAdj performs considerably better than FW-KRR even at relatively small window sizes.

\begin{figure}[H]
    \centering
    \includegraphics[width=0.9\textwidth]{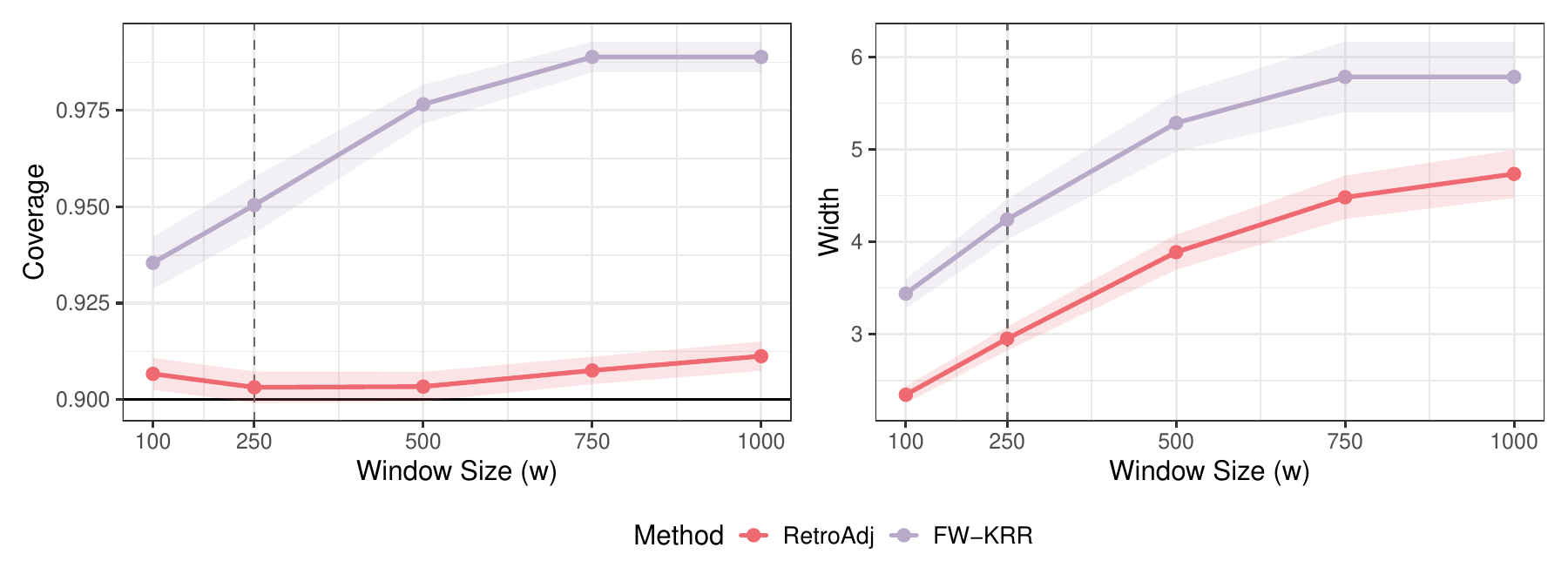}
    \caption{Coverage and prediction interval width in Setting 1 across window sizes $w$. Results are averaged over 50 replications; lines and points denote the mean, ribbons denote $\pm 1$ standard deviation, and the vertical dashed line marks $w=250$.}
    \label{fig:window_sensitivity}
\end{figure}

\subsection{Run-Time Comparison}

\cref{tab:runtime_elec2} reports the average runtime per update across different window sizes on the Elec2 dataset. We observe that the naive retrospective implementation (RetroAdj-naive), which recomputes leave-one-out residuals via repeated retraining, quickly becomes computationally infeasible as the window size increases, resulting in timeouts for moderate values of $w$. This highlights the prohibitive cost of naive retrospective adjustment. In contrast, the proposed RetroAdj method remains computationally tractable across all window sizes. 
While its runtime increases with $w$, the method scales in a controlled manner and remains practical for moderate window sizes.

Forward-update methods exhibit relatively stable runtime across window sizes. However, at practically relevant window sizes (e.g., $w=250$ used in our experiments), the runtime of RetroAdj is comparable to FW-KRR, and substantially lower than the other forward methods. Overall, these results demonstrate that RetroAdj provides a favorable trade-off: it enables retrospective adjustment at a computational cost that is dramatically lower than naive implementations, while remaining competitive with forward-update methods in practical settings.

\begin{table}[H]
\centering
\caption{Average runtime (ms/update) across window sizes on the Elec2 dataset. Runtime was measured as wall-clock elapsed time, excluding one-time initialization costs, and each estimate was averaged over 5 repetitions after one untimed warm-up update and a timed block of 50 consecutive updates; runs exceeding 180 seconds were recorded as timeouts.}
\label{tab:runtime_elec2}
\footnotesize
\setlength{\tabcolsep}{4pt}
\renewcommand{\arraystretch}{1.05}
\begin{tabular}{lccccc}
\hline
Method & $w = 100$ & $w = 250$ & $w = 500$ & $w = 750$ & $w = 1000$ \\
\hline
RetroAdj
& 0.54 (0.14) & 1.43 (0.39) & 4.78 (0.44) & 12.32 (1.09) & 26.78 (3.55) \\
RetroAdj-naive
& 128.9 (12.7) & 2903.2 (331.6) & timeout & timeout & timeout \\
FW-KRR
& 0.80 (0.06) & 1.04 (0.44) & 0.75 (0.16) & 0.82 (0.10) & 0.90 (0.19) \\
FW-AMF
& 1.91 (0.17) & 2.12 (0.27) & 2.37 (0.30) & 2.36 (0.37) & 2.35 (0.30) \\
FW-FIMT-DD
& 31.47 (2.69) & 30.56 (3.21) & 31.16 (3.55) & 32.57 (3.26) & 34.06 (4.79) \\
FW-AMRules
& 32.20 (3.51) & 30.91 (3.24) & 32.22 (3.45) & 34.45 (6.17) & 33.06 (3.44) \\
\hline
\end{tabular}
\end{table}

\subsection{Robustness to the Step Size Choice}

We conduct a simulation to illustrate the robustness of the proposed RetroAdj to the choice of the step size in the ACI algorithm. As shown in \cref{fig::robust}, RetroAdj shows superior performance with a single ACI instance, even under the least favorable setting of $\gamma$, outperforming the forward conformal inference method equipped with the tuning strategy of DtACI. RetroAdj maintains coverage consistently around 0.9, whereas FW-KRR tends to over coverage due to excessively wide intervals.

\begin{figure}[H]
    \centering
    \includegraphics[width=0.9\textwidth]{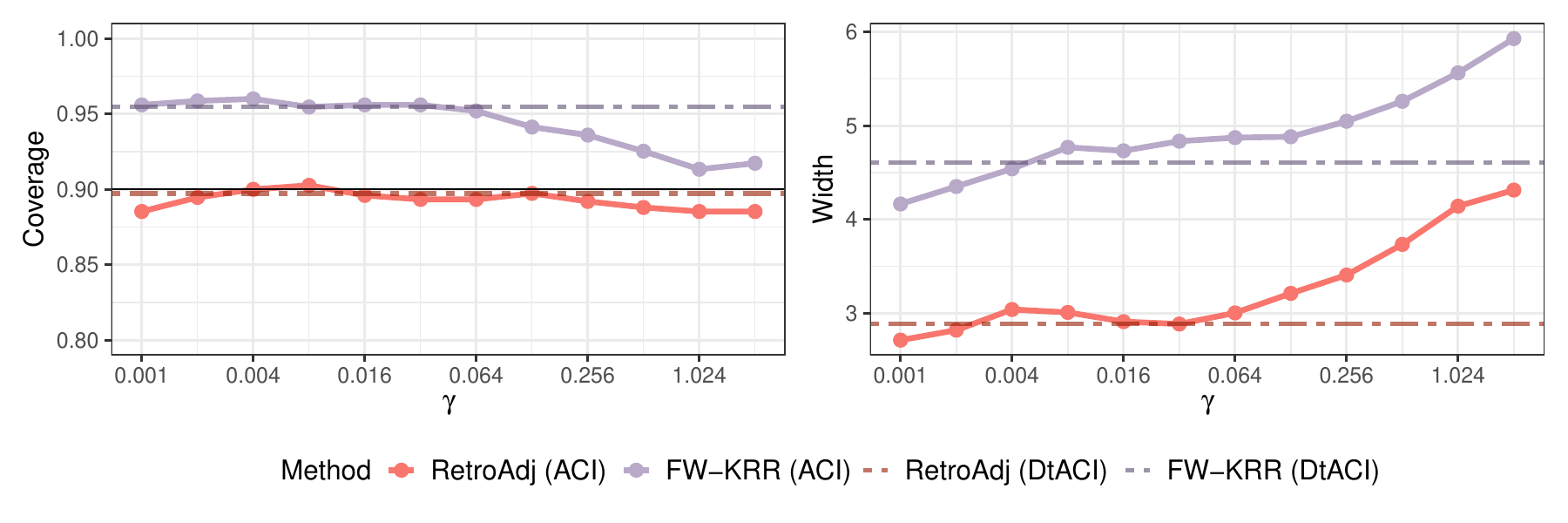}
    \caption{Coverage and prediction interval width of the proposed RetroAdj and forward online conformal inference methods (FW) for Setting 1.}
    \label{fig::robust}
\end{figure}

\subsection{Application to Neural Tangent Kernel}

In this section, we examine whether the proposed method performs consistently well when applied with different kernel function. Specifically, we consider  a Neural Tangent Kernel (NTK) of a two-layer ReLU network  \citep{lee2019wide} which is given by
\begin{align*}
    \kappa(x, y) = \frac{x^{\top}y}{\|x\| \| y\|} (\sin \theta + (\pi - \theta) \cos \theta) + \frac{\pi - \theta}{\pi},
\end{align*}
for $x, y \in \R^d$, where the angle $\theta$ is defined as $\theta := \arccos \left( \frac{x^{\top}y}{\|x\| \| y\|} \right).$ In \cref{fig:ntk}, we observe that the results for the RBF kernel and NTK are almost similar.

\begin{figure}[H]
    \centering
    \includegraphics[width=0.9\textwidth]{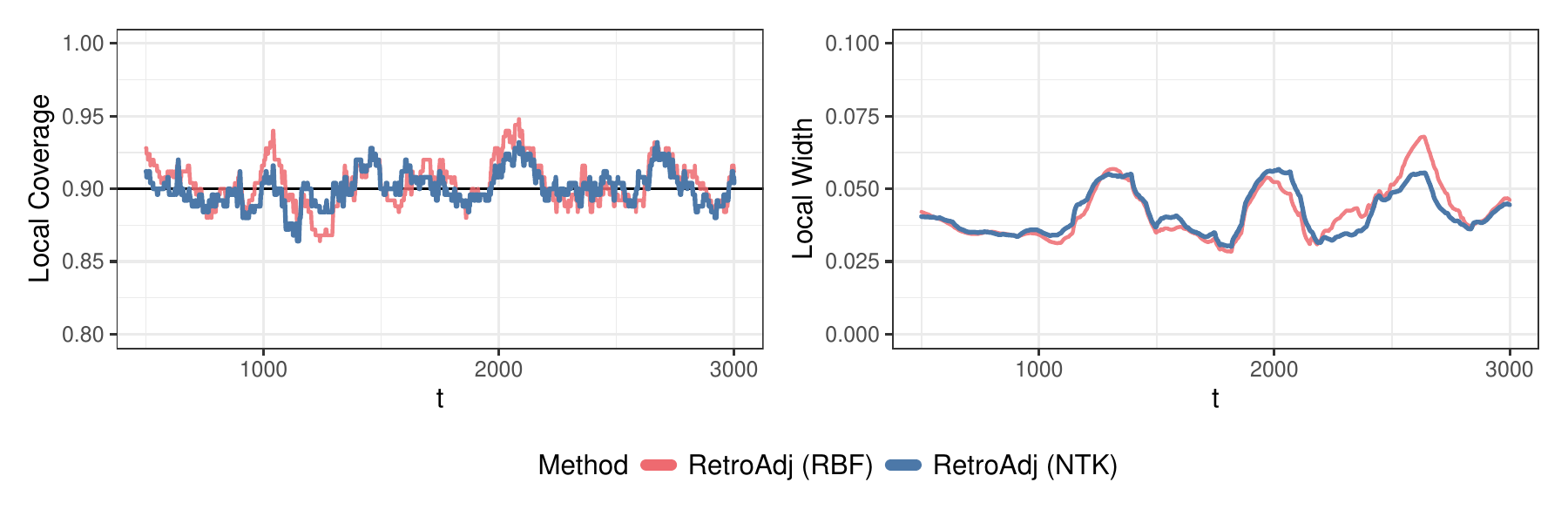}
    \caption{Local coverage and prediction interval width of RetroAdj with the RBF kernel and NTK for the prediction of the Elec2 dataset (Same setting as \cref{sec:realdata}). For both methods, the DtACI algorithm is employed to adjust the miscoverage level.}
    \label{fig:ntk}
\end{figure}

\end{appendices}
\end{document}